\let\cref\Cref
\title{Quantifying the Conceptual Error in Dimensionality Reduction}
\date{tba 2020}
\institute{University of Kassel, Germany}
\newcommand{\Scon}{\mathbb{S}}
\newcommand{\Tcon}{\mathbb{T}}
\newcommand{\Sh}{\mathfrak{S}}
\newcommand{\SH}{\underline{\mathfrak{S}}}
\DeclareMathOperator{\id}{id}
\newcommand*{\logeq}{\ratio\Leftrightarrow}
\DeclareMathOperator{\Ext}{Ext}
\DeclareMathOperator{\Int}{Int}
\DeclareMathOperator{\app}{\mid}
\newcounter{countitems}
\newcounter{nextitemizecount}
\newcommand{\setupcountitems}{%
  \stepcounter{nextitemizecount}%
  \setcounter{countitems}{0}%
  \preto\item{\stepcounter{countitems}}%
}
\newcommand{\computecountitems}{%
  \edef\@currentlabel{\number\c@countitems}%
  \label{countitems@\number\numexpr\value{nextitemizecount}-1\relax}%
}
\newcommand{\nextitemizecount}{%
  \getrefnumber{countitems@\number\c@nextitemizecount}%
}
\newcommand{\previtemizecount}{%
  \getrefnumber{countitems@\number\numexpr\value{nextitemizecount}-1\relax}%
}
\computecountitems\ifnumcomp{\previtemizecount}{>}{3}{\end{multicols}}{}}
\newcommand\blfootnote[1]{%
  \begingroup
  \renewcommand\thefootnote{}\footnote{#1}%
  \addtocounter{footnote}{-1}%
  \endgroup
}
\newcommand{\putorcid}[1]{\makebox[0pt][r]{\raisebox{-1ex}[0pt][0pt]{\textcolor{red}{\tiny  #1}}}}
\begin{document}
\parindent0pt

\author{Tom Hanika\inst{1,2}\putorcid{0000-0002-4918-6374}%
\and Johannes Hirth\inst{1,2}\putorcid{0000-0001-9034-0321}
}
\institute{%
 Knowledge \& Data Engineering Group,
 University of Kassel, Germany\\[0.5ex]
 \and
 Interdisciplinary Research Center for Information System Design\\
 University of Kassel, Germany\\[0.5ex]
 \email{tom.hanika@cs.uni-kassel.de, hirth@cs.uni-kassel.de}
}

\maketitle
\blfootnote{Authors are given in alphabetical order.
  No priority in authorship is implied.}

\begin{abstract}
  Dimension reduction of data sets is a standard problem in the realm
  of machine learning and knowledge reasoning. They affect patterns in
  and dependencies on data dimensions and ultimately influence any
  decision-making processes.  Therefore, a wide variety of reduction
  procedures are in use, each pursuing different objectives. A so far
  not considered criterion is the conceptual continuity of the
  reduction mapping, i.e., the preservation of the conceptual
  structure with respect to the original data set. Based on the notion
  scale-measure from formal concept analysis we present in this work
  a) the theoretical foundations to detect and quantify conceptual errors
  in data scalings; b) an experimental investigation of our
  approach on eleven data sets that were respectively treated with a
  variant of non-negative matrix factorization.
\end{abstract}
\begin{keywords}
Formal~Concept~Analysis; Dimension~Reduction; Conceptual~Measurement; Data~Scaling  
\end{keywords}

\section{Introduction}
The analysis of large and complex data is presently a challenge for
many data driven research fields. This is especially true when using
sophisticated analysis and learning methods, since their computational
complexity usually grows at least superlinearly with the problem
size. One aspect of largeness and complexity is the explicit data
dimension, e.g., number of features, of a data set. Therefore, a
variety of methods have been developed to reduce exactly this data
dimension to a computable size, such as \emph{principal component
  analysis}, \emph{singular value
  decomposition}, or \emph{factor
  analysis}~\cite{descreteBasis}. What all these methods have in common is that they are
based on the principle of data scaling~\cite{navimeasure}.

A particularly challenging task is to apply \emph{Boolean factor
  analysis} (BFA)~\cite{bmf}, as the distinct feature values are
restricted to either 0 (false) or 1 (true). For example, given the
binary data set matrix $K$, the application of a BFA yields two binary
data matrices $S,H$ of lower dimension, such that $S\cdot H$
approximates $K$ with respect to a previously selected norm
$\|\cdot\|$. The factor $S$ can be considered as a lower dimensional
representation of $K$, i.e., a scaling of $K$. The connection between
the scaling features of $S$ and the original data features of $K$ is
represented by $H$. The quality of an approximation, and therewith the
quality of a scale $S$, is usually scored through the Frobenius norm of
$K-SH$, or other functions~\cite{NMFApplication2,MeasuresNMF}, such as
the \emph{Residual Sum of Squares} or, in the binary setting, the
\emph{Hemming distance}. These scoring functions give a good
impression of the extent to which the linear operator $K$ is
approximated by $S\cdot H$, yet, they are incapable to detect the
deviation of internal incidence structure of $S$ with respect to $K$,
which we want to call \emph{conceptual scaling error}.

A well defined formalism from Formal Concept Analysis
(FCA)~\cite{fca-book} to analyze the resulting inconsistencies in
(binary) data scaling is \emph{scale-measures}~\cite{cmeasure,
  scaling}. In this work, we build up on this notion and introduce a
comprehensive framework for quantification of quantifying the
conceptual errors of scales. The so introduced mathematical tools are
capable of determining how many conceptual errors arise from a
particular scaling $S$ and pinpoint which concepts are falsely
introduced or lost. For this we overcome the potential exponential
computational demands of computing complete conceptual structures by
employing previous results on the scale-measures decision
problem~\cite{navimeasure}. We motivate our results with accompanying
examples and support our results with an experiment on eleven data
sets.

\section{Scales and Data}

\subsubsection*{FCA Recap}
In the field of Formal concept analysis (FCA)~\cite{Wille1982,
  fca-book} the task for data scaling, and in particular feature
scaling, is considered a fundamental step for data analysis. Hence,
data scaling is part of the foundations of FCA~\cite{fca-book} and it
is frequently investigated within FCA~\cite{cmeasure, scaling}.

The basic data structure of FCA is the \emph{formal context}, see our
running example $\context_{W}$~\cref{fig:bj1} (top). That is a triple
$(G,M,I)$ with non-empty and finite\footnote{This restriction is in
  general not necessary. Since data sets can be considered finite
  throughout this work and since this assumption allows a clearer
  representation of the results within this work, it was made.} set
$G$ (called \emph{objects}), finite set $M$ (called \emph{attributes})
and a binary relation $I \subseteq G \times M$ (called
\emph{incidence}). The tuple $(g,m)$ indicates ``$g$ has attribute $m$
in $\context$'', when $(g,m)\in I$.  Any context $\Scon=(H,N,J)$ with
$H\subseteq G, N\subseteq M$ and $J=I\cap (H \times N)$ we
call \emph{induced sub-context} of $\context$, and denote this
relation by $\leq$, i.e., $\Scon \leq \context$.

The incidence relation $I\subseteq G\times M$ gives rise to
\emph{the} natural Galois connection between $P(G)$ and $P(M)$, which
is the pair\footnote{Both operators are traditionally denoted by the
  same symbol $\cdot'$} of operators $\cdot'\colon\mathcal{P}(G)\to
\mathcal{P}(M),$ $A\mapsto A'=\{m\in M \mid \forall a \in A\colon(a,m)\in
I\}$, and $\cdot'\colon\mathcal{P}(M)\to\mathcal{P}(G),$ $B\mapsto
B'=\{g\in G\mid \forall b\in B\colon(g,b)\in I\}$, each called
\emph{derivation}. Using these operator, a \emph{formal concept} is a
pair $(A,B)\in \mathcal{P}(G)\times \mathcal{P}(M)$ with $A'=B$ and $A
= B'$, where $A$ and $B$ are called \emph{extent} and \emph{intent},
respectively.

The two possible compositions of the derivation operators lead to two
\emph{closure operators}
$\cdot''\colon\mathcal{P}(G)\to\mathcal{P}(G)$ and
$\cdot''\colon\mathcal{P}(M)\to\mathcal{P}(M)$, and in turn to two
\emph{closure spaces} $\Ext(\context)\coloneqq (G,'')$ and
$\Int(\context)\coloneqq(M,'')$. Both closure systems are represented
in the \emph{(concept) lattice}
$\BV(\context)=(\mathcal{B}(\context),\leq)$, where the set of all
formal concepts is denoted by $\mathcal{B}(\context)\coloneqq
\{(A,B)\in\mathcal{P}(G)\times\mathcal{P}(M)\mid A'=B\wedge B'=A\}$
the order relation is $(A,B)\leq (C,D):\logeq A\subseteq C$. An
example drawing of such a lattice is depicted in~~\cref{fig:bj1} (bottom).

\begin{figure}[t]
  \label{bjice}
  \centering
    \scalebox{0.55}{
      \hspace{-1.4cm}
      \begin{cxt}
        \cxtName{$\context_{W}$}
        \att{\shortstack{has limbs\\ (L) }}
        \att{\shortstack{breast feeds\\ (BF) }}
        \att{\shortstack{needs\\ chlorophyll (Ch)}}
        \att{\shortstack{needs water\\ to live (W)}}
        \att{\shortstack{lives on\\ land (LL)}}
        \att{\shortstack{lives in\\ water (LW)}}
        \att{\shortstack{can move\\ \ (M)}}
        \att{\shortstack{monocotyledon\\ (MC) }}
        \att{\shortstack{dicotyledon\\ (DC) }}
        \obj{xx.xx.x..}{\shortstack{\ dog\\ \ }}
        \obj{...x.xx..}{\shortstack{\ fish\\ \ leech }}
        \obj{..xxx..x.}{\shortstack{\ corn\\ \ }}
        \obj{x..x.xx..}{\shortstack{\ bream\\ \ }}
        \obj{..xx.x.x.}{\shortstack{\ water\\ \ weeds}}
        \obj{..xxx...x}{\shortstack{\ bean\\ \ }}
        \obj{x..xxxx..}{\shortstack{\ frog\\ \ }}
        \obj{..xxxx.x.}{\shortstack{\ reed\\ \ }}
      \end{cxt}}  

    \scalebox{0.4}{\colorlet{mivertexcolor}{black!80}
\colorlet{jivertexcolor}{black!80}
\colorlet{vertexcolor}{black!80}
\colorlet{bordercolor}{black!80}
\colorlet{linecolor}{gray}
\tikzset{vertexbase/.style={semithick, shape=circle, inner sep=1pt, outer sep=1pt, draw=bordercolor},%
  vertex/.style={vertexbase, fill=vertexcolor!45},%
  mivertex/.style={vertexbase, fill=mivertexcolor!45},%
  jivertex/.style={vertexbase, fill=jivertexcolor!45},%
  divertex/.style={vertexbase, top color=mivertexcolor!45, bottom color=jivertexcolor!45},%
  conn/.style={-, thick, color=linecolor}%
}
\begin{tikzpicture}[scale=0.35,font=\huge]
  \begin{scope} %for scaling and the like
    \begin{scope} %draw vertices
      \foreach \nodename/\nodetype/\xpos/\ypos in {%
        0/vertex/0.0/0.0,
        1/jivertex/-6.0/10.0,
        2/jivertex/6.0/10.0,
        3/divertex/-12.0/14.0,
        4/jivertex/-6.0/14.0,
        5/jivertex/5.0/15.0,
        6/jivertex/9.0/15.0,
        7/divertex/13.0/15.0,
        8/vertex/-11.0/17.0,
        9/jivertex/-5.0/17.0,
        10/vertex/0.0/18.0,
        11/mivertex/7.0/19.0,
        12/vertex/11.0/19.0,
        13/mivertex/-11.0/21.0,
        14/mivertex/-1.0/23.0,
        15/mivertex/-10.0/24.0,
        16/mivertex/10.0/24.0,
        17/mivertex/1.0/31.0,
        18/vertex/0.0/36.0
      } \node[\nodetype] (\nodename) at (\xpos, \ypos) {};
    \end{scope}
    \begin{scope} %draw connections
      \path (2) edge[conn] (6);
      \path (1) edge[conn] (4);
      \path (12) edge[conn] (17);
      \path (6) edge[conn] (11);
      \path (14) edge[conn] (18);
      \path (0) edge[conn] (2);
      \path (9) edge[conn] (14);
      \path (2) edge[conn] (5);
      \path (10) edge[conn] (17);
      \path (7) edge[conn] (12);
      \path (0) edge[conn] (1);
      \path (6) edge[conn] (12);
      \path (15) edge[conn] (18);
      \path (9) edge[conn] (15);
      \path (5) edge[conn] (14);
      \path (17) edge[conn] (18);
      \path (1) edge[conn] (10);
      \path (0) edge[conn] (7);
      \path (8) edge[conn] (17);
      \path (13) edge[conn] (15);
      \path (4) edge[conn] (13);
      \path (0) edge[conn] (3);
      \path (8) edge[conn] (13);
      \path (4) edge[conn] (9);
      \path (5) edge[conn] (11);
      \path (12) edge[conn] (16);
      \path (2) edge[conn] (10);
      \path (3) edge[conn] (8);
      \path (16) edge[conn] (18);
      \path (11) edge[conn] (16);
      \path (10) edge[conn] (14);
      \path (1) edge[conn] (8);
    \end{scope}
    \begin{scope} %add labels
      \foreach \nodename/\labelpos/\labelopts/\labelcontent in {%
        1/below//{frog},
        2/below//{reed},
        3/below//{dog},
        3/above//{BF},
        4/below//{bream},
        5/below//{ww},
        6/below//{corn},
        7/below//{bean},
        7/above//{DC},
        9/below//{fish leech},
        11/above//{MC},
        13/above//{L},
        14/above//{LW},
        15/above//{M},
        16/above//{Ch},
        17/above//{LL},
        18/above//{W}
      } \coordinate[label={[\labelopts]\labelpos:{\labelcontent}}](c) at (\nodename);
    \end{scope}
  \end{scope}
\end{tikzpicture}}
  \caption{This formal context of the \emph{Living Beings and Water}
    data set (top) and its concept
    lattice (bottom).}
  \label{fig:bj1}
\end{figure}
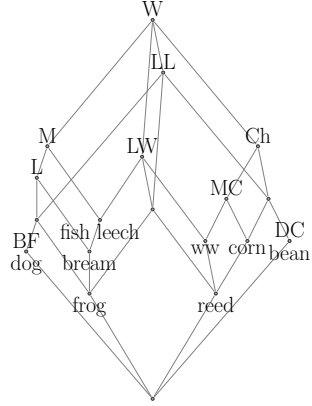

\subsection{Scales-Measures}\label{sec:motivate}
The basis for quantifying the conceptual error of a data scaling is
the continuity of the scaling map with respect to the resulting
closure spaces. We say a map map $f:G_1\to G_2$ is \emph{continuous}
with respect to the closure spaces $(G_1,c_1)$ and $(G_2,c_2)$ if and
only if $\text{for all}\ A\in\mathcal{P}(G_2) \text{ we have }
c_1(f^{-1}(A))\subseteq f^{-1}(c_2(A))$. That is, a map is continuous
iff the preimage of closed sets is closed. Within FCA this notion is
captured by the following definition.

\begin{definition}[Scale-Measure (cf. Definition 91, \cite{fca-book})]
\label{def:sm}
Let $\context = (G,M,I)$ and
$\mathbb{S}=(G_{\mathbb{S}},M_{\mathbb{S}},I_{\mathbb{S}})$ be a
formal contexts. The map $\sigma :G \rightarrow G_{\mathbb{S}}$ is
called an \emph{$\mathbb{S}$-measure of $\context$ into the scale
  $\mathbb{S}$} iff the preimage
$\sigma^{-1}(A)\coloneqq \{g\in G\mid \sigma(g)\in A\}$
of every extent $A\in \Ext(\Scon)$ is an extent of $\context$.
\end{definition}

The (scaling-)map $\sigma$ can be understood as an interpretation of
the objects from $\context$ using the attribute (features) of
$\Scon$. Hence, we denote
$\sigma^{-1}[\Ext(\Scon)]\coloneqq\bigcup_{A\in\Ext(\Scon)}\sigma^{-1}(A)$
as the set of extents that are \emph{reflected} by the \emph{scale
  context} $\Scon$.

\begin{figure}[t]
  \label{bjicemeasure}
\hspace{-1cm}\scalebox{0.78}{
      \begin{cxt}
        \cxtName{$\Scon$}
        \att{\shortstack{\textbf{W} \\ \, }}
        \att{\shortstack{\textbf{LW} \\ \, }}
        \att{\shortstack{\textbf{plants} \\ $\coloneqq$ C}} % C
        \att{\shortstack{\textbf{animals} \\ $\coloneqq$  M}} % M
        \att{\shortstack{\textbf{land plants}\\ $\coloneqq $LL$\wedge $ plant}} % LL $\wedge$ plant
        \att{\shortstack{\textbf{water plants} \\ $\coloneqq$ LW $\wedge$ plant}} % LW $\wedge$ plant
        \att{\shortstack{\textbf{land animal}\\ $\coloneqq$ LL $\wedge$ animal}} %LL $\wedge$ animal
        \att{\shortstack{\textbf{water animal} \\ $\coloneqq$ LW $\wedge$ animal}} %LW $\wedge$ animal
        \att{\shortstack{\textbf{mammal} \\ $\coloneqq$ animal $\wedge$ BF}} % animal $\wedge$ BF
        \obj{x..x..x.x}{\shortstack{\ dog\\ \ }}
        \obj{xx.x...x.}{\shortstack{\ fish\\ \ leech }}
        \obj{x.x.x....}{\shortstack{\ corn\\ \ }}
        \obj{xx.x...x.}{\shortstack{\ bream\\ \ }}
        \obj{xxx..x...}{\shortstack{\ water\\ \ weeds}}
        \obj{x.x.x....}{\shortstack{\ bean\\ \ }}
        \obj{xx.x..xx.}{\shortstack{\ frog\\ \ }}
        \obj{xxx.xx...}{\shortstack{\ reed\\ \ }}
      \end{cxt}}  

  \begin{tikzpicture}
    \node at (0,0)
    {\scalebox{0.5}{\colorlet{mivertexcolor}{white}
\colorlet{jivertexcolor}{black}
\colorlet{vertexcolor}{black}
\colorlet{bordercolor}{black}
\colorlet{linecolor}{gray}
\tikzset{vertexbase/.style={semithick, shape=circle, inner sep=2pt, outer sep=0pt, draw=bordercolor,draw opacity=0.4},%
  vertex/.style={vertexbase, fill=vertexcolor!45},%
  mivertex/.style={vertexbase, fill=mivertexcolor!45},%
  jivertex/.style={vertexbase, fill=jivertexcolor!45},%
  divertex/.style={vertexbase, top color=mivertexcolor!45, bottom color=jivertexcolor!45},%
  conn/.style={-, thick, color=linecolor}%
}
\tikzstyle{n} = [text width=2.5cm,align=center]
\begin{tikzpicture}[scale=0.35,font=\Large]
  \begin{scope} %for scaling and the like
    \begin{scope} %draw vertices
      \foreach \nodename/\nodetype/\xpos/\ypos in {%
        0/vertex/0.0/0.0,
        1/vertex/-6.0/10.0,
        2/vertex/6.0/10.0,
        3/vertex/-12.0/14.0,
        4/mivertex/-6.0/14.0,
        5/vertex/5.0/15.0,
        6/mivertex/9.0/15.0,
        7/mivertex/13.0/15.0,
        8/vertex/-11.0/17.0,
        9/vertex/-5.0/17.0,
        10/mivertex/0.0/18.0,
        11/mivertex/7.0/19.0,
        12/vertex/11.0/19.0,
        13/mivertex/-11.0/21.0,
        14/vertex/-1.0/23.0,
        15/vertex/-10.0/24.0,
        16/vertex/10.0/24.0,
        17/mivertex/1.0/31.0,
        18/vertex/0.0/36.0
      } \node[\nodetype] (\nodename) at (\xpos, \ypos) {};
    \end{scope}
    \begin{scope} %draw connections
      \path (2) edge[conn,draw opacity=0.8] (6);
      \path (1) edge[conn,draw opacity=0.8] (4);
      \path (12) edge[conn,draw opacity=0.8] (17);
      \path (6) edge[conn,draw opacity=0.8] (11);
      \path (14) edge[conn,draw opacity=0.8] (18);
      \path (0) edge[conn,draw opacity=0.8] (2);
      \path (9) edge[conn,draw opacity=0.8] (14);
      \path (2) edge[conn,draw opacity=0.8] (5);
      \path (10) edge[conn,draw opacity=0.8] (17);
      \path (7) edge[conn,draw opacity=0.8] (12);
      \path (0) edge[conn,draw opacity=0.8] (1);
      \path (6) edge[conn,draw opacity=0.8] (12);
      \path (15) edge[conn,draw opacity=0.8] (18);
      \path (9) edge[conn,draw opacity=0.8] (15);
      \path (5) edge[conn,draw opacity=0.8] (14);
      \path (17) edge[conn,draw opacity=0.8] (18);
      \path (1) edge[conn,draw opacity=0.8] (10);
      \path (0) edge[conn,draw opacity=0.8] (7);
      \path (8) edge[conn,draw opacity=0.8] (17);
      \path (13) edge[conn,draw opacity=0.8] (15);
      \path (4) edge[conn,draw opacity=0.8] (13);
      \path (0) edge[conn,draw opacity=0.8] (3);
      \path (8) edge[conn,draw opacity=0.8] (13);
      \path (4) edge[conn,draw opacity=0.8] (9);
      \path (5) edge[conn,draw opacity=0.8] (11);
      \path (12) edge[conn,draw opacity=0.8] (16);
      \path (2) edge[conn,draw opacity=0.8] (10);
      \path (3) edge[conn,draw opacity=0.8] (8);
      \path (16) edge[conn,draw opacity=0.8] (18);
      \path (11) edge[conn,draw opacity=0.8] (16);
      \path (10) edge[conn,draw opacity=0.8] (14);
      \path (1) edge[conn,draw opacity=0.8] (8);
    \end{scope}
    \begin{scope} %add labels
      \foreach \nodename/\labelpos/\labelopts/\labelcontent in {%
        1/below//{frog},
        2/below//{reed},
        3/below//{dog},
        3/above//{breast feeds},
        4/below//{bream},
        5/below//{water weeds},
        6/below//{corn},
        7/below//{bean},
        7/above//{DC},
        9/below//{fish leech},
        11/above//{MC},
        13/above//{L},
        14/above//{LW},
        15/above//{M},
        16/above//{Ch},
        17/above//{LL},
        18/above//{W}
      } \coordinate[label={[\labelopts]\labelpos:{\labelcontent}}](c) at (\nodename);
    \end{scope}
  \end{scope}
\end{tikzpicture}}};
    \node[draw opacity = 0,draw=white, text=black] at (4,0)
    {$\Rightarrow$};
    \node at (7,0)
    {\scalebox{0.7}{\colorlet{mivertexcolor}{black!80}
\colorlet{jivertexcolor}{black!80}
\colorlet{vertexcolor}{black!80}
\colorlet{bordercolor}{black!80}
\colorlet{linecolor}{gray}
\tikzset{vertexbase/.style={semithick, shape=circle, inner sep=2pt, outer sep=0pt, draw=bordercolor},%
  vertex/.style={vertexbase, fill=vertexcolor!45},%
  mivertex/.style={vertexbase, fill=mivertexcolor!45},%
  jivertex/.style={vertexbase, fill=jivertexcolor!45},%
  divertex/.style={vertexbase, top color=mivertexcolor!45, bottom color=jivertexcolor!45},%
  conn/.style={-, color=linecolor}%
}
\tikzstyle{n} = [text width=2cm,align=center]
\begin{tikzpicture}[scale=0.35,font=\large]
  \begin{scope} %for scaling and the like
    \begin{scope} %draw vertices
      \foreach \nodename/\nodetype/\xpos/\ypos in {%
        0/vertex/0.0/0.0,
        1/jivertex/-3.5/7.0,
        2/jivertex/3.50/7.0,
        4/divertex/-8.0/10.0,
        5/jivertex/-3.0/11.0,
        6/jivertex/3.5/11.0,
        7/mivertex/-7.0/13.0,
        8/mivertex/8.0/14.0,
        9/mivertex/-6.0/16.0,
        10/mivertex/0.0/16.0,
        11/mivertex/6.0/16.0,
        12/vertex/0.0/24.0
      } \node[\nodetype] (\nodename) at (\xpos, \ypos) {};
    \end{scope}
    \begin{scope} %draw connections
      \path (6) edge[conn,draw opacity=0.8] (10);
      \path (1) edge[conn,draw opacity=0.8] (5);
      \path (7) edge[conn,draw opacity=0.8] (9);
      \path (0) edge[conn,draw opacity=0.8] (2);
      \path (4) edge[conn,draw opacity=0.8] (7);
      \path (5) edge[conn,draw opacity=0.8] (9);
      \path (6) edge[conn,draw opacity=0.8] (11);
      \path (0) edge[conn,draw opacity=0.8] (1);
      \path (5) edge[conn,draw opacity=0.8] (10);
      \path (8) edge[conn,draw opacity=0.8] (11);
      \path (9) edge[conn,draw opacity=0.8] (12);
      \path (0) edge[conn,draw opacity=0.8] (4);
      \path (1) edge[conn,draw opacity=0.8] (7);
      \path (2) edge[conn,draw opacity=0.8] (6);
      \path (2) edge[conn,draw opacity=0.8] (8);
      \path (10) edge[conn,draw opacity=0.8] (12);
      \path (11) edge[conn,draw opacity=0.8] (12);
    \end{scope}
    \begin{scope} %add labels
      \foreach \nodename/\labelpos/\labelopts/\labelcontent in {%
        12/above/n/{W},                                    
        1/below/n/{frog},
        2/below/n/{reed},
        4/below/n/{dog},
        4/above/n/{mammal}, %W $\wedge$ LL $\wedge$ L $\wedge$ M $\wedge$ BF
        5/below/n/{fish leech, bream},
        5/above//{water animal}, %W $\wedge$ LW $\wedge$ M
        6/below/n/{water weeds},
        6/above//{water plant}, %W $\wedge$ LW $\wedge$ C
        7/above//{land animal}, %W $\wedge$ LL $\wedge$ L $\wedge$ M
        8/below/n/{corn, bean},
        8/above//{land plant}, %W $\wedge$ LL $\wedge$ C
        9/above/n/{animal}, %W $\wedge$ M
        10/above/n/{LW}, % W $\wedge$ LW
        11/above/n/{plant} %W $\wedge$ C
      } \coordinate[label={[\labelopts]\labelpos:{\labelcontent}}](c) at (\nodename);
    \end{scope}
  \end{scope}
\end{tikzpicture}}};
  \end{tikzpicture}
  \caption{A scale context (top), its concept lattice (bottom right)
    for which $\id_G$ is a scale-measure of the context in
    \cref{bjice} and the reflected extents
    $\sigma^{-1}[\Ext(\Scon)]$ (bottom left) indicated as
    in gray.}
\end{figure}
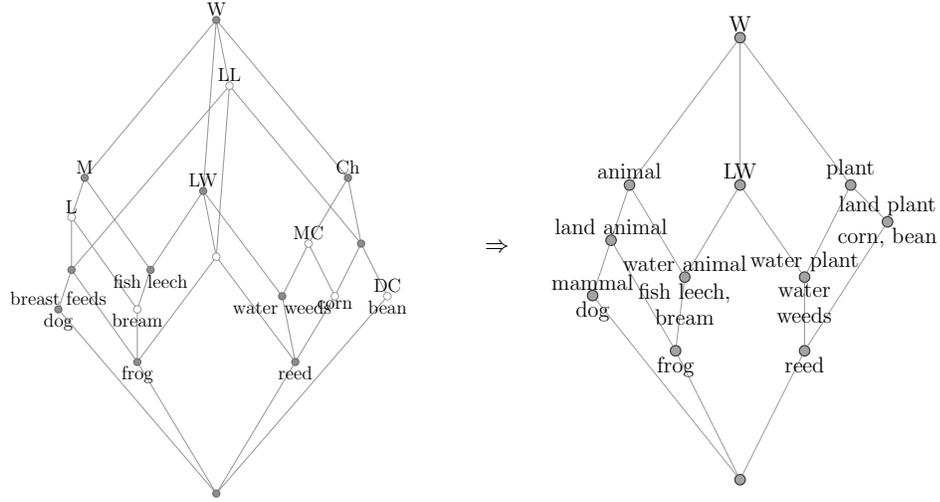

In~\cref{bjicemeasure} we depicted an example scale-context $\Scon$,
where the attributes of the scale-context are constructed by
conjunctions of attributes from $\context_{W}$, as seen
in~\cref{fig:bj1}.  This scaling is based on the original object set
$G$ and we observe that $\Scon$ reflects twelve out of
the nineteen concepts from $\mathfrak{B}(\context_{\text{W}})$.

For any two scale-measures $(\sigma,\Scon),(\psi,\Tcon)$ we
say~\cite{navimeasure} that $(\sigma,\Scon)$ is \emph{finer than}
$(\psi,\Tcon)$, iff $\psi^{-1}(\Tcon)\subseteq
\sigma^{-1}[\Ext(\Scon)]$. Dually we say then that $(\sigma,\Scon)$ is
\emph{coarser than} $(\psi,\Tcon)$. From both relation an equivalence
relation $\sim$ arises naturally.  The set of all possible
scale-measures for some $\context$, denoted by $\Sh(\context)\coloneqq
\{(\sigma, \Scon)\mid \sigma$ is a $\Scon-$measure of $\context \}$,
is therefore ordered. Furthermore, it is known~\cite{navimeasure} that
factorizing $\Sh(\context)$ by $\sim$ leads to a lattice ordered
structure $\SH(\context) = (\nicefrac{\Sh(\context)}{\sim},\leq)$,
called the \emph{scale-hierarchy of $\context$}. This hierarchy is
isomorphic to the set of all sub-closure systems of $\Ext(\context)$,
i.e. $\{Q\subseteq \Ext(\context) \mid Q \text{ is a Closure System on
}G\}$, ordered by set inclusion $\subseteq$.

% \begin{figure}[t]
%   \centering
% \scalebox{1.4}{\begin{tikzpicture}
%     \draw (1,-4) to[out=30, in=-30] (1,0);
%     \draw (-1,-4) to[out=150, in=-150] (-1,0);
%     \draw (0.7,-3.75) to[out=45, in=-45] (0.7,-2.25);
%     \draw (-0.7,-3.75) to[out=135, in=-135] (-0.7,-2.25);
%     \draw (0.7,-1.75) to[out=45, in=-45] (0.7,-0.25);
%     \draw (-0.7,-1.75) to[out=135, in=-135] (-0.7,-0.25);
%     \node[draw opacity = 0,draw=white, text=black] at (0,0)
%     {$[(\id, \context)]$};
%     \node[draw opacity = 0,draw=white, text=black] at (0,-4)
%     {$[(\id,\context_{\{G\}})]$};
%     \node[draw opacity = 0,draw=white, text=black] at (0,-2)
%     {$[(\sigma, \Scon)]$};
%   \end{tikzpicture}}
%   \caption{Scale Hierarchy of $\context$ with indicated scale-measures.}
%   \label{fig:SmAsCl}
% \end{figure}

Every scale-measure $(\Scon,\sigma)\in \Sh(\context)$ does allow for a
canonical representation~\cite{navimeasure}, i.e., $(\sigma,\Scon)\sim
(\id, \context_{\sigma^{-1}(\Ext(\Scon))})$. This representation,
however, very often eludes human explanation to some degree. This
issue can be remedied through a related approach called \emph{logical
  scaling}~\cite{logiscale} that is a representation of the
scale-context attributes by conjunction, disjunction, and negation of
the original attributes, formally
$M_\Scon\subseteq\mathcal{L}(M,\{\wedge,\vee,\neg\})$. Such a
representation does always exist:

\begin{proposition}[CNF of Scale-measures (cf. Proposition 23, \cite{navimeasure})] \label{lem:appconst} 
Let $\context$ be a
  context, $(\sigma,\Scon)\in \Sh(\context)$. Then the scale-measure
  $(\psi,\Tcon)\in \Sh(\context)$ given by
  \[\psi = \id_G\quad \text{ and }\quad \Tcon =
    \app\limits_{A\in\sigma^{-1}[\Ext(\Scon)]} (G,\{\phi = \wedge\
    A^{I}\},I_{\phi}) \] is equivalent to $(\sigma,\Scon)$ and is
  called \emph{conjunctive normalform of}
  $(\sigma,\Scon)$.  
\end{proposition}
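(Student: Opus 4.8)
The plan is to determine $\Ext(\Tcon)$ explicitly and show it equals the family of reflected extents $\sigma^{-1}[\Ext(\Scon)]$; once this is established, both the scale-measure property of $(\psi,\Tcon)$ and its equivalence to $(\sigma,\Scon)$ drop out of the characterisation of $\sim$ through reflected extents. First I would look at a single apposition factor. For a fixed $A\in\sigma^{-1}[\Ext(\Scon)]$ the context $(G,\{\phi=\wedge A^{I}\},I_{\phi})$ has exactly one non-trivial attribute $\phi$, and an object is incident with $\phi$ precisely when it carries every attribute of $A^{I}=A'$. Thus the extent of $\phi$ is $(A')'=A''$, and since $(\sigma,\Scon)$ is a scale-measure every reflected set $A$ is an extent of $\context$, so $A''=A$. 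Consequently this factor has precisely the two extents $G$ and $A$.

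Next I would pass from the factors to their apposition. For contexts over a common object set $G$ the extents of the apposition are exactly the intersections of extents of the individual factors, so $\Ext(\Tcon)$ is the closure system on $G$ generated under intersection by $\{G\}\cup\sigma^{-1}[\Ext(\Scon)]$. The crucial point is that this generating family is already intersection-closed: preimages commute with intersections and $\Ext(\Scon)$ is closed under intersection, whence $\sigma^{-1}[\Ext(\Scon)]$ is too, and it contains $G=\sigma^{-1}(G_{\Scon})$. Therefore the apposition produces no additional sets and $\Ext(\Tcon)=\sigma^{-1}[\Ext(\Scon)]$.

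Finally I would assemble the claim. Since $\psi=\id_{G}$, each extent of $\Tcon$ is its own preimage and lies in $\sigma^{-1}[\Ext(\Scon)]\subseteq\Ext(\context)$, so $(\psi,\Tcon)\in\Sh(\context)$; moreover $\psi^{-1}[\Ext(\Tcon)]=\Ext(\Tcon)=\sigma^{-1}[\Ext(\Scon)]$, meaning the two scale-measures reflect exactly the same extents, which by the definition of the finer and coarser relations yields $(\psi,\Tcon)\sim(\sigma,\Scon)$. The main obstacle is getting the apposition semantics exactly right—in particular confirming that the logically defined attribute $\phi=\wedge A^{I}$ has extent exactly $A$ rather than some strictly larger closure, and that the generating family is already closed—so that the construction introduces no spurious concepts and loses none of the genuine ones.
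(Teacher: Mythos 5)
Your proof is correct. Note that there is nothing in the paper to compare it against: the paper states \cref{lem:appconst} as a recalled result, imported without proof from Proposition~23 of \cite{navimeasure}. Your argument is the natural (and essentially the standard) one for this fact: each one-attribute factor $(G,\{\phi=\wedge A^{I}\},I_{\phi})$ has extent set $\{G,A\}$, where the identity $\phi^{I_\phi}=A''=A$ genuinely uses the hypothesis $(\sigma,\Scon)\in\Sh(\context)$; the extents of an apposition over a common object set are exactly the intersections of factor extents; and $\sigma^{-1}[\Ext(\Scon)]$ is already a closure system on $G$ (preimages commute with intersections, $\Ext(\Scon)$ is intersection-closed, and $G=\sigma^{-1}(G_{\Scon})$), so the apposition creates nothing new and $\Ext(\Tcon)=\sigma^{-1}[\Ext(\Scon)]$. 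From this, membership in $\Sh(\context)$ and the equivalence $(\psi,\Tcon)\sim(\sigma,\Scon)$ follow immediately from the characterisation of $\sim$ via reflected extents, exactly as you say.
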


\section{Conceptual Errors in Data Scaling}
Scaling and factorizations procedures are essential to almost all data
science (DS) and machine learning (ML) approaches. For example,
relational data, such as a formal context $\context$, is often scaled
to a lower (attribute-) dimensional representation $\Scon$. Such
scaling procedures, e.g., \emph{principle component analysis, latent
  semantic analysis, non-negative matrix factorization}, however, do
almost always not account for the conceptual structure of the original
data $\context$. Hence, in order to comprehend the results of DS/ML
procedures, it is crucial to investigate to what extent and which
information is lost during the scaling process. In the following we
want to introduce a first approach to quantify and treat \emph{error}
in data scalings through the notion of scale-measures. For this we
need the notion of \emph{context apposition}~\cite{fca-book}. Given
two formal contexts $\context_{1}\coloneqq(G_{1},M_{1},I_{1})$,
$\context_{2}\coloneqq(G_{2},M_{2},I_{2})$ with $G_{1}=G_{2}$ and
$M_{1}\cap M_{2}=\emptyset$, then $\context_{1}\mid
\context_{2}\coloneqq (G,M_{1}\cup M_{2},I_{1}\cup I_{2})$. If
$M_{1}\cap M_{2}\neq\emptyset$ the apposition is constructed disjoint
union of the attribute sets.

\begin{proposition}\label{prop:err-measure}
  Let $\context$, $\Scon$ be formal contexts,
  $\sigma:G_{\context}\to G_{\Scon}$ a map and let $\mathbb{A} = \context 
  \mid (G_{\context},M_\Scon,I_\sigma)$ with $I_{\sigma}= \{(g,\sigma(g))\mid
  g\in G_{\context}\}\circ I_{\Scon}$, then $(\sigma,\Scon)$ and $(\id_{G_{\context}},\context)$ are scale-measures of
  $\mathbb{A}$.
\end{proposition}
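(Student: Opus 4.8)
The plan is to exploit the apposition structure of $\mathbb{A}$ together with the preimage characterisation of scale-measures. Write $\mathbb{B}\coloneqq(G_\context,M_\Scon,I_\sigma)$ and $G\coloneqq G_\context$, so that $\mathbb{A}=\context\mid\mathbb{B}$ is an apposition over the shared object set $G$. First I would record how derivations behave under apposition: for an attribute set $B=B_1\cup B_2$ with $B_1\subseteq M$ and $B_2\subseteq M_\Scon$ disjoint, one has $B^{\mathbb{A}}=B_1^{I}\cap B_2^{I_\sigma}$, and hence the closure of any $A\subseteq G$ in $\mathbb{A}$ factors as $A^{\mathbb{A}\mathbb{A}}=A^{II}\cap A^{I_\sigma I_\sigma}$. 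Since $G$ is an extent of every context on $G$ (namely $\emptyset^{\,\cdot}=G$), intersecting with $G$ yields the inclusions $\Ext(\context)\subseteq\Ext(\mathbb{A})$ and $\Ext(\mathbb{B})\subseteq\Ext(\mathbb{A})$; informally, appending attributes can only refine the closure system.

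Given this, the first claim is immediate. Because $\id_G^{-1}(A)=A$ for every $A\subseteq G$, and every extent $A\in\Ext(\context)$ already lies in $\Ext(\mathbb{A})$ by the inclusion above, the preimage under $\id_G$ of each extent of $\context$ is an extent of $\mathbb{A}$. Hence $(\id_{G_\context},\context)$ is an $\context$-measure of $\mathbb{A}$.

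For the second claim the crucial step is to unwind the relational composition defining $I_\sigma$: by construction $(g,m)\in I_\sigma$ iff $(\sigma(g),m)\in I_\Scon$, so for every $B\subseteq M_\Scon$ one obtains the pullback identity
\[
  B^{I_\sigma}=\{\,g\in G\mid \forall m\in B\colon(\sigma(g),m)\in I_\Scon\,\}=\sigma^{-1}\bigl(B^{I_\Scon}\bigr).
\]
Now take any extent $A\in\Ext(\Scon)$ and put $B\coloneqq A^{I_\Scon}$, so that $A=B^{I_\Scon}$. The identity then gives $\sigma^{-1}(A)=\sigma^{-1}(B^{I_\Scon})=B^{I_\sigma}$, which is the derivation of an attribute set in $\mathbb{B}$ and therefore an extent of $\mathbb{B}$. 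By $\Ext(\mathbb{B})\subseteq\Ext(\mathbb{A})$ it is also an extent of $\mathbb{A}$, so $\sigma^{-1}(A)\in\Ext(\mathbb{A})$ and $(\sigma,\Scon)$ is an $\Scon$-measure of $\mathbb{A}$.

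I expect the only genuine obstacle to be bookkeeping rather than mathematics. One must read $I_\sigma=\{(g,\sigma(g))\}\circ I_\Scon$ correctly as the pullback incidence $(g,m)\in I_\sigma\Leftrightarrow(\sigma(g),m)\in I_\Scon$, and one must respect the disjoint-union convention for the attribute sets when $M\cap M_\Scon\neq\emptyset$, since otherwise the factorisation $B^{\mathbb{A}}=B_1^{I}\cap B_2^{I_\sigma}$ would be muddled. Once the pullback identity $B^{I_\sigma}=\sigma^{-1}(B^{I_\Scon})$ is established, both assertions collapse to the elementary observation that enlarging a context's attribute set enlarges its family of extents.
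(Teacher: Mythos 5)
Your proof is correct and takes essentially the same route as the paper's: both arguments rest on the apposition fact that $\Ext(\context)$ and $\Ext(G_{\context},M_\Scon,I_\sigma)$ are contained in $\Ext(\mathbb{A})$, combined with the pullback identity $B^{I_\sigma}=\sigma^{-1}\bigl(B^{I_\Scon}\bigr)$ applied to $B=A^{I_\Scon}$ for a given extent $A\in\Ext(\Scon)$. The only cosmetic difference is that you derive the containments from the factorisation of derivations in the apposition, whereas the paper invokes the fact that $\Ext(\mathbb{A})$ is the smallest intersection-closed family containing both extent systems.
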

\begin{proof}
  It is to show that $\id_{G_{\context}}^{-1}[\Ext(\context)]$ and
  $\sigma^{-1}[\Ext(\Scon)]$ are subsets of $\Ext(\mathbb{A})$. The
  set $\Ext(\mathbb{A})$ is the smallest intersection closed set
  containing $\Ext(\context)$ and
  $\Ext(G_{\context},M_\Scon,I_\sigma)$. Thus
  $\id_{G_{\context}}^{-1}(\Ext(\context))\subseteq
  \Ext(\mathbb{A})$.
  Let $C\in \Ext(\Scon)$, then there exists a representation $C =
  D^{I_{\Scon}}$ with $D\subseteq M_{\Scon}$. The  derivation
  $D^{I_{\sigma}}$ is an extent in $\Ext(G_{\context},M_\Scon,I_\sigma)$ and is equal
  to $\sigma^{-1}(D^{I_{\Scon}})$, since $I_{\sigma}
  =\{(g,\sigma(g))\mid g\in G_{\context}\}\circ I_{\Scon}$. Thus,
  $D^{I_{\sigma}}= \sigma^{-1}(C)$ and therefore we find $\sigma^{-1}(C)\in
  \Ext(G_{\context},M_\Scon,I_\sigma)$. 
\end{proof}

The extent closure system of the in~\cref{prop:err-measure} constructed
context $\mathbb{A}$, is equal to the join of $\Ext(\context)$ and
$\sigma^{-1}[\Ext(\Scon)]$ in the closure system of all closure systems on
$G$ (cf. Proposition 13 \cite{navimeasure}). Hence, $\Ext(A)$ is the smallest
closure system on $G$, for which $\Ext(\context)$ and
$\sigma^{-1}[\Ext(\Scon)]$ are contained.

In the above setting one can consider $\context$ and $\Scon$ as
\emph{consistent scalings} of $\mathbb{A}$. Based on this the question
for representing and quantifying \emph{inconsistencies} arises.

\begin{definition}[Conceptual Scaling Error]
  Let $\context,\Scon$ be formal contexts and $\sigma:G_{\context}\to
  G_{\Scon}$, then the \emph{conceptual scaling error} of
  $(\sigma,\Scon)$ with respect to $\context$ is the set
  $\mathcal{E}^{\context}_{\sigma,\Scon}\coloneqq
  \sigma^{-1}[\Ext(\Scon)]\setminus \Ext(\context)$.
\end{definition}

The conceptual scaling error $\mathcal{E}^{\context}_{\sigma,\Scon}$ consists of all
pre-images of closed object sets in $\Scon$ that are not
closed in the context $\context$, i.e., the object sets that
contradict the scale-measure criterion. Hence,
$\mathcal{E}^{\context}_{\sigma,\Scon}=\emptyset$ iff
$(\sigma,\Scon)\in \Sh(\context)$.

In the following, we denote by
$\sigma^{-1}[\Ext(\Scon)]|_{\Ext(\context)}\coloneqq
\sigma^{-1}[\Ext(\Scon)]\cap \Ext(\context)$ the set of consistently
reflected closed object sets of $\Scon$ by $\sigma$. This set can be
represented as the intersection of two closure systems and is thereby
a closure system as well. Using this notation together with the
canonical representation \cref{lem:appconst} we can find the following statement.

\begin{corollary}\label{cor:full}
  For $\context,\Scon$ and $\sigma: G_{\context}\mapsto G_{\Scon}$,
  there exists a scale-measure $(\psi,\Tcon)\in \Sh(\context)$ with
  $\psi^{-1}(\Ext(\Tcon))=\sigma^{-1}[\Ext(\Scon)]|_{\Ext(\context)}$.
\end{corollary}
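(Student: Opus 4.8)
The plan is to produce the required scale-measure explicitly by observing that the target collection is already a sub-closure system of $\Ext(\context)$. Write $Q \coloneqq \sigma^{-1}[\Ext(\Scon)]|_{\Ext(\context)} = \sigma^{-1}[\Ext(\Scon)] \cap \Ext(\context)$. As noted immediately before the statement, $Q$ is an intersection of two closure systems on $G$ and is therefore itself a closure system on $G$; moreover, by construction $Q \subseteq \Ext(\context)$, so $Q$ is a sub-closure system of $\Ext(\context)$. This is the only structural fact I really need, and it is essentially already available.

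First I would invoke the isomorphism recalled in \cref{sec:motivate} between the scale-hierarchy $\SH(\context)$ and the set of all sub-closure systems of $\Ext(\context)$ ordered by $\subseteq$. Since $Q$ is such a sub-closure system, it is the image of a unique class in $\SH(\context)$, and picking any representative $(\psi,\Tcon)\in\Sh(\context)$ of that class already yields a scale-measure whose reflected extents equal $Q$. To make the representative concrete and self-contained, I would then mirror the conjunctive construction of \cref{lem:appconst}, but applied to $Q$ in place of $\sigma^{-1}[\Ext(\Scon)]$: set $\psi = \id_{G}$ and $\Tcon = \app_{A \in Q}(G, \{\phi = \wedge\, A^{I}\}, I_{\phi})$.

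The verification then proceeds as follows. For each $A \in Q$ we have $A \in \Ext(\context)$, hence $A = A^{II}$, so the single conjunctive attribute $\wedge\, A^{I}$ has extent exactly $A$ in its component context. The extents of the apposition are the intersection-closure of the extents contributed by the components together with $G$; since $Q$ is already a closure system this closure is $Q$ itself, giving $\Ext(\Tcon) = Q$. Because $\psi = \id_{G}$, we obtain $\psi^{-1}(\Ext(\Tcon)) = \Ext(\Tcon) = Q = \sigma^{-1}[\Ext(\Scon)]|_{\Ext(\context)}$, and the inclusion $Q \subseteq \Ext(\context)$ shows that the preimage of every extent of $\Tcon$ is an extent of $\context$, i.e.\ $(\psi,\Tcon)\in\Sh(\context)$ by \cref{def:sm}.

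The one point that needs care — and which I expect to be the main obstacle — is the identity $\Ext(\Tcon) = Q$. The inclusion $Q \subseteq \Ext(\Tcon)$ is the easy direction, since each $A\in Q$ is closed in $\context$ and is thus recovered as the extent of its own attribute; the reverse inclusion relies on $Q$ being intersection-closed, so that forming arbitrary attribute-intersections inside the apposition cannot leave $Q$. This is precisely where restricting from $\sigma^{-1}[\Ext(\Scon)]$ to its trace $Q$ on $\Ext(\context)$ is essential: without the intersection with $\Ext(\context)$ the generating sets need not be extents of $\context$, the conjunctions $\wedge\, A^{I}$ would fail to recover them, and $(\psi,\Tcon)$ would violate the scale-measure criterion.
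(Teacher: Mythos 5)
Your proposal is correct and follows essentially the same route the paper intends: it uses the remark preceding the corollary that $\sigma^{-1}[\Ext(\Scon)]|_{\Ext(\context)}$ is a closure system (as an intersection of two closure systems) contained in $\Ext(\context)$, and then realizes it as the reflected extents of a scale-measure via the conjunctive construction of \cref{lem:appconst} (equivalently, via the scale-hierarchy isomorphism). Your explicit verification that $\Ext(\Tcon)$ equals this set, using that each $A$ in it satisfies $A=A^{II}$ and that the set is intersection-closed, is exactly the argument the paper leaves implicit.
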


The conceptual scaling error
$\mathcal{E}^{\context}_{\sigma,\Scon}$ does not constitute a closure system on
$G$, since it lacks the top element $G$. Moreover, the meet of
elements $A,D\in \mathcal{E}^{\context}_{\sigma,\Scon}$ can be closed in
$\context$ and thus $A\wedge D \notin \mathcal{E}^{\context}_{\sigma,\Scon}$.

To pinpoint the cause of the conceptual scaling inconsistencies we may
investigate the scale's attributes using the following proposition.

\begin{proposition}[Deciding Scale-Measures (cf. Proposition 20, \cite{navimeasure})]
  \label{prop:attr}
  Let $\context$ and $\Scon$ be two formal contexts and $\sigma:
  G_{\context} \to G_{\Scon}$, then TFAE:

  \begin{enumerate}[i)]
  \item $\sigma \text{ is a } \Scon\text{-measure of }\context$
  \item $\sigma^{-1}(m^{I_{\Scon}})\in \Ext(\context) \text{ for all } m\in M_{\Scon} $
  \end{enumerate}
\end{proposition}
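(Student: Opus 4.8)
The plan is to prove the equivalence by establishing both implications, where the forward direction follows immediately from \cref{def:sm} and the backward direction exploits the fact that extents are generated by attribute extents under intersection.

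For the forward implication (i)~$\Rightarrow$~(ii), I would observe that for every attribute $m \in M_{\Scon}$ the set $m^{I_{\Scon}}$ is the derivation of the singleton $\{m\}$ and is therefore an extent of $\Scon$, since every derivation is a closed set (applying the derivation operator thrice equals applying it once). As $\sigma$ is an $\Scon$-measure, \cref{def:sm} applied to the extent $m^{I_{\Scon}} \in \Ext(\Scon)$ yields $\sigma^{-1}(m^{I_{\Scon}}) \in \Ext(\context)$, which is precisely statement (ii).

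The substantive direction is (ii)~$\Rightarrow$~(i). The key structural fact is that every extent $A \in \Ext(\Scon)$ is an intersection of attribute extents, namely $A = A'' = \bigcap_{m \in A'} m^{I_{\Scon}}$, where the empty intersection is read as $G_{\Scon}$. I would combine this with the elementary observation that preimages commute with intersections,
\[
  \sigma^{-1}(A) = \sigma^{-1}\Bigl(\bigcap_{m \in A'} m^{I_{\Scon}}\Bigr) = \bigcap_{m \in A'} \sigma^{-1}(m^{I_{\Scon}}).
\]
By hypothesis (ii) each set $\sigma^{-1}(m^{I_{\Scon}})$ belongs to $\Ext(\context)$, and since $\Ext(\context)$ is a closure system it is closed under intersections; hence $\sigma^{-1}(A) \in \Ext(\context)$. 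As $A \in \Ext(\Scon)$ was arbitrary, this is exactly the scale-measure property (i).

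The only delicate point is the handling of the empty intersection, i.e.\ the case $A = G_{\Scon}$ arising from $A' = \emptyset$: here one separately notes that $\sigma^{-1}(G_{\Scon}) = G_{\context}$, which is an extent of $\context$ as the closure of the empty attribute set. Beyond this bookkeeping I anticipate no real obstacle, since the whole argument rests on the compatibility of the intersection-closure that defines extents with the intersection-preservation of preimages.
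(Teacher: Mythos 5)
Your proof is correct, and it is essentially the argument behind this result: the paper itself states \cref{prop:attr} without proof, deferring to Proposition~20 of \cite{navimeasure}, where the equivalence is established exactly as you do it --- attribute extents $m^{I_{\Scon}}$ are extents of $\Scon$, every extent of $\Scon$ is an intersection $\bigcap_{m \in A'} m^{I_{\Scon}}$ of attribute extents, preimages commute with intersections, and $\Ext(\context)$ is intersection-closed. Your separate treatment of the empty intersection ($A = G_{\Scon}$, whose preimage $G_{\context}$ is always an extent) is the right bookkeeping and completes the argument.
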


Based on this result, we can decide if $(\sigma,\Scon)$ is a
scale-measure of $\context$ solely based on the attribute extents of
$\Scon$. In turn this enables us to determine the particular
attributes $n$ that cause conceptual scaling errors, i.e.
$\sigma^{-1}(n^{I_{\Scon}})\not\in \Ext(\context)$. We call the set of
all these attributes the \emph{attribute scaling error}.

%% Despite the simple
%% decision problem for each attribute, the number of false extents that
%% can be produced by a single attribute can be exponential in the size
%% of the context. For $\context = (n+1,n,\neq)$ and $\Scon =
%% (n+1,n+1,\neq)$ with the $id_{n+1}$ map, the attribute $n+1$ leads to
%% $\mathcal{E}^{\context}_{\id_{n+1},\Scon} = \mathcal{P}(n)\setminus
%% \{\}$.

\begin{corollary}\label{cor:part}
  For formal  contexts $\context, \Scon$ and map  $\sigma: G_{\context}\to
  G_{\Scon}$ let the set  $O=\{m\in M_{\Scon}\mid \sigma^{-1}(m^{I_{\Scon}})\in
  \Ext(\context)\}$. Then $(\sigma,(G_{\Scon},O,I_{\Scon}\cap
  G_{\Scon}\times O))$ is a scale-measure of $\context$.
\end{corollary}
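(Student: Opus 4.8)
The plan is to reduce the claim to the attribute-wise characterization of scale-measures in \cref{prop:attr}, so that I never have to compute the full extent lattice of the restricted scale. Write $\Tcon \coloneqq (G_{\Scon}, O, I_{\Scon}\cap (G_{\Scon}\times O))$ for that restricted context; it is simply the induced sub-context of $\Scon$ obtained by keeping all objects but discarding every attribute outside $O$, i.e.\ $\Tcon \leq \Scon$. By \cref{prop:attr}, to show that $\sigma$ is a $\Tcon$-measure of $\context$ it suffices to verify $\sigma^{-1}(m^{I_{\Tcon}})\in \Ext(\context)$ for every single attribute $m\in O$, rather than for all extents of $\Tcon$.

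The decisive observation is that deleting attributes does not alter the derivation of any surviving attribute. First I would check that for each $m\in O$ we have $m^{I_{\Tcon}} = m^{I_{\Scon}}$: the object set derived from a single attribute $m$ depends only on the incidence pairs $(g,m)$, and since $m\in O$ all of these pairs are retained when passing to $I_{\Scon}\cap(G_{\Scon}\times O)$. Consequently $\sigma^{-1}(m^{I_{\Tcon}}) = \sigma^{-1}(m^{I_{\Scon}})$ for every $m\in O$.

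It then remains only to combine this with the defining property of $O$. By construction $O = \{m\in M_{\Scon}\mid \sigma^{-1}(m^{I_{\Scon}})\in \Ext(\context)\}$, so for each $m\in O$ the set $\sigma^{-1}(m^{I_{\Scon}})$ is an extent of $\context$; together with the preceding step this gives $\sigma^{-1}(m^{I_{\Tcon}})\in \Ext(\context)$ for all $m\in O$, and \cref{prop:attr} yields that $\sigma$ is a $\Tcon$-measure of $\context$.

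I expect no substantial obstacle here; the only point demanding genuine care is the identity $m^{I_{\Tcon}} = m^{I_{\Scon}}$, which is precisely why \cref{prop:attr} is the right instrument. Restricting the attribute set can certainly change the full extent closure system (a set closed in $\Scon$ need not survive in $\Tcon$), but it leaves the individual single-attribute object sets untouched, so exactly the attributes collected in $O$ stay consistent after the restriction — and \cref{prop:attr} guarantees that controlling these generators is enough.
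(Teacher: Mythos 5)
Your proof is correct and takes exactly the route the paper does: the paper's entire proof of \cref{cor:part} reads ``Follows directly from applying \cref{prop:attr}.'' Your write-up simply makes explicit the one detail the paper leaves implicit, namely that $m^{I_{\Tcon}} = m^{I_{\Scon}}$ for every retained attribute $m \in O$, so that the attribute-wise criterion of \cref{prop:attr} combined with the defining property of $O$ yields the claim.
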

\begin{proof}
  Follows directly from applying \cref{prop:attr}.
\end{proof}

The thus constructed scale-measure does not necessarily reflect all
extents in $\sigma^{-1}[\Ext(\Scon)]|_{\Ext(\context)}$. For this,
consider the example $\context = (\{1,2,3\},\{1,2,3\},=)$ with
$\Scon=(\{1,2,3\},\{1,2,3\},\neq)$ and the map $\id_{\{1,2,3\}}$. The
error set is equal to
$\mathcal{E}^{\context}_{\sigma,\Scon}={\{1,2,3\}\choose 2}$. Hence,
none of the scale-attributes $M_{\Scon}=\{1,2,3\}$ fulfills the
scale-measure property. By omitting the
whole set of attributes $M_{\Scon}$, we result in the context
$(G,\{\},\{\})$ whose set of extents is equal to $\{G\}$. The set
$\sigma^{-1}[\Ext(\Scon)]|_{\Ext(\context)}$ however is equal to
$\{\{\},\{1\},\{2\},\{3\},\{1,2,3\}\}$.

\begin{figure}[t]
  \centering
\hspace{-1.5cm}  \begin{minipage}{.49\linewidth}
\scalebox{1.3}{    \begin{tikzpicture}
      \draw[draw=gray!60] (1,-4.25) to[out=30, in=-30] (1,0.25);
      \draw[draw=gray!60] (-1,-4.25) to[out=150, in=-150] (-1,0.25);
      \node[draw opacity = 0,draw=white, text=gray!60] at (0,0.5)
      {$\mathcal{P}(G)$};
      \node[draw opacity = 0,draw=white, text=black] at (0,-4.5)
      {$\{G\}$};

      \node[draw opacity = 0,draw=white, text=black] at (-0.5,-1.7)
      {\scriptsize$\Ext(\context)$};
      \draw (-1,-4) to[out=150, in=-150] (-1,-2);
      \draw (0,-4) to[out=30, in=-30] (0,-2);

      \node[draw opacity = 0,draw=white, text=black] at (1,-2.2)
      {\scriptsize$\sigma^{-1}[\Ext(\Scon)]$};

      \node[draw opacity = 0,draw=white, text=black] at (0.25,-3.3)
      {\scriptsize\ $\sigma^{-1}[\Ext(\Scon)]{\wedge} \Ext(\context)$};

      \draw[dashed] (-0.1,-2.2) to[out=-30, in=90] (0.25,-3.1);
      \draw[dashed] (1,-2.4) to[out=-150, in=90] (0.25,-3.1);

      \draw[<->,draw=red] (1.1,-2.4) to (1,-3.1);
      \node[draw opacity = 0,draw=white, text=red] at (1.4,-2.9)
      {\scriptsize$\mathcal{E}^{\context}_{\sigma,\Scon}$};

      % \node[draw opacity = 0,draw=white, text=black] at (0,-3)
      % {$\scriptscriptstyle [(G,\sigma^{-1}(\Ext(\Scon)), \in)]$}; % $[\Scon]$
    \end{tikzpicture}  }
  \end{minipage}\hspace{0.5cm}
  \begin{minipage}{.49\linewidth}
\scalebox{1.3}{ \begin{tikzpicture}
      \draw[draw=gray!60] (1,-4.25) to[out=30, in=-30] (1,0.25);
      \draw[draw=gray!60] (-1,-4.25) to[out=150, in=-150] (-1,0.25);
      \node[draw opacity = 0,draw=white, text=gray!60] at (0,0.5)
      {$\mathcal{P}(G)$};

      \node[draw opacity = 0,draw=white, text=black] at (0,-4.5)
      {$\{G\}$};

      \node[draw opacity = 0,draw=white, text=black] at (-0.5,-1.7)
      {\scriptsize$\Ext(\context)$};

      \node[draw opacity = 0,draw=white, text=black] at (1,-2.2)
      {\scriptsize\ $\sigma^{-1}[\Ext(\Scon)]$};
      \node[draw opacity = 0,draw=white, text=black] at (0.25,-3.3)
      {\tiny$\sigma^{-1}[\Ext(\Scon)]{\wedge}\Ext(\context)$};

      \draw[dashed] (-0.1,-2.2) to[out=-30, in=90] (0.25,-3.1);
      \draw[dashed] (1,-2.4) to[out=-150, in=90] (0.25,-3.1);

      \draw (0.25,-4) to[out=150, in=-150] (0.25,-2.4);
      \draw (1.25,-4) to[out=30, in=-30] (1.5,-2.4);

      \draw[draw=red,dashed] (1.1,-2.4) to[in=120,out=-30] (1.4,-2.7);
      \node[draw opacity = 0,draw=white, text=red] at (1.4,-2.9)
      {\scriptsize$\phi(\mathcal{E}^{\context}_{\sigma,\Scon})$};

      % \node[draw opacity = 0,draw=white, text=black] at (0,-3)
      % {$\scriptscriptstyle [(G,\sigma^{-1}(\Ext(\Scon)), \in)]$}; % $[\Scon]$
    \end{tikzpicture}}
  \end{minipage}
  \caption{The conceptual scaling error and the consistent part of
    $(\sigma,\Scon)$ in $\SH(\context)$ (left). The right
    represents both parts as scale-measures of $\Scon$.}
  \label{fig:SmAsCl}
\end{figure}

\subsection{Representation and Structure of Conceptual Scaling Errors}\label{sec:3.1}
So far, we apprehended $\mathcal{E}^{\context}_{\sigma,\Scon}$ as the
set of erroneous preimages. However, the conceptual scaling error may
be represented as a part of a scale-measure:
%
%% This provides representations with at maximum order
%% dimension of $\BV(\Scon)$ (cf. Proposition 24, \cite{navimeasure}).
%
\begin{inparaenum}[i)]
\item The first approach is to analyze the extent structure of
  $\sigma^{-1}[\Ext(\Scon)]$. This leads to a scale-measure
  $(\sigma,\Scon)$ of the apposition $\Scon\mid\context$, according to
  \cref{prop:err-measure}.  The conceptual scaling error
  $\mathcal{E}^{\context}_{\sigma,\Scon}$ is a subset of the reflected
  extents of $(\sigma,\Scon)$.  
\item The second approach is based on our result in
  \cref{cor:full}. The conceptual scaling error
  $\mathcal{E}^{\context}_{\sigma,\Scon}$ cannot be represented as
  scale-measure of $\context$. However, since
  $\mathcal{E}^{\context}_{\sigma,\Scon}\subseteq
  \sigma^{-1}[\Ext(\Scon)]$ there is a scale-measure of $\Scon$ that
  reflects $\mathcal{E}^{\context}_{\sigma,\Scon}$ (right,
  \cref{fig:SmAsCl}). Such a scale-measure can be computed using the
  canonical representation of scale-measures as highlighted by
  $\phi(\mathcal{E}^{\context}_{\sigma,\Scon})$ in \cref{fig:SmAsCl}.
  Since the scale-hierarchy is
  join-pseudocomplemented~\cite{navimeasure}, we can compute a smaller
  representation of $\sigma^{-1}[\Ext(\Scon)]|_{\Ext(\context)}$ and
  $\mathcal{E}^{\context}_{\sigma,\Scon}$. In detail, for any
  $\sigma^{-1}[\Ext(\Scon)]|_{\Ext(\context)}$ there exists a least
  element in $\SH(\Scon)$ whose join with
  $\sigma^{-1}[\Ext(\Scon)]|_{\Ext(\context)}$ yields
  $\sigma^{-1}[\Ext(\Scon)]$. Due to its smaller size, the so computed
  join-complement can be more human comprehensible than
  $\mathcal{E}^{\context}_{\sigma,\Scon}$.
\item The third option is based on splitting the scale context
  according to its consistent attributes, see~\cref{cor:part}. Both
  split elements are then considered as scale-measures of
  $\Scon$. This results in two smaller, potentially more
  comprehensible, concept lattices.
\end{inparaenum}
Additionally, all discussed scale-measures can be given in conjunctive
normalform.% to improve their human readability.
% \begin{figure}[t]
%   \centering
%   \scalebox{1.4}{    \begin{tikzpicture}
%       \draw[draw=gray!60] (1,-4.25) to[out=30, in=-30] (1,0.25);
%       \draw[draw=gray!60] (-1,-4.25) to[out=150, in=-150] (-1,0.25);
%       \node[draw opacity = 0,draw=white, text=gray!60] at (0,0.5)
%       {$\mathcal{P}(G)$};
%       \node[draw opacity = 0,draw=white, text=black] at (0,-4.5)
%       {$\{G\}$};

%       \node[draw opacity = 0,draw=white, text=black] at (-0.5,-1.7)
%       {\scriptsize$\Ext(\context)$};
%       \draw (-1,-4) to[out=150, in=-150] (-1,-2);
%       \draw (0,-4) to[out=30, in=-30] (0,-2);

%       \node[draw opacity = 0,draw=white, text=black] at (1,-2.2)
%       {\scriptsize\ $\sigma^{-1}(\Ext(\Scon))$};
%       \node[draw opacity = 0,draw=white, text=black] at (0.25,-0.7)
%       {\scriptsize$\sigma^{-1}(\Ext(\Scon)){\vee} \Ext(\context)$};

%       \draw[dashed] (-0.5,-1.5) to[out=60, in=-150] (0.25,-0.9);
%       \draw[dashed] (1,-2) to[out=90, in=-30] (0.25,-0.9);

%     \end{tikzpicture}  }
%   \caption{Error of $(\sigma,\Scon)$ in $\SH(\context)$}
%   \label{fig:Sm-gc}
% \end{figure}  

\subsection{Computational Tractability}
The first thing to note, with respect to the computational
tractability, is that the size of the concept lattice of $\Scon$, as
proposed in i) (above) is larger compared to the split approaches, as
proposed in ii) and iii). This difference results in order dimensions
for the split elements that are bound by the order dimension of
$\Scon$ (cf. Proposition 24, \cite{navimeasure}). The approach in ii)
splits the scale $\Scon$ according to the conceptual scaling error
$\mathcal{E}_{\sigma,\Scon}^{\context}$, a potentially exponentially
sized problem with respect to $\Scon$. The consecutive computation of
the join-complement involves computing all meet-irreducibles in
$\sigma^{-1}[\Ext(\Scon)]$, another computationally expensive task. In
contrast, approach iii) splits $\Scon$ based on consistent attributes
and is takes therefore polynomial time in the size of
$\Scon$. However, as shown in the example after \cref{cor:part},
approach iii) may lead to less accurate representations.

\begin{figure}

\begin{minipage}{.26\linewidth}Attributes of $\context$\\

\tiny
clearing land (CL), draft (Dr), dung (Du), education (Ed), eggs (Eg),
feathers (Fe), fiber (F), fighting (Fi), guarding (G), guiding (Gu),
herding (He), horns (Ho), hunting (Hu), lawn mowing (LM), leather
(Le), manure (Ma), meat (Me), milk (Mi), mount (Mo), narcotics
detection (ND), ornamental (O), pack (Pa), pest control (PC), pets
(Pe), plowing (Pl), policing (Po), racing (Ra), rescuing (R), research
(Re), service (Se), show (Sh), skin (Sk), sport (Sp), therapy (Th),
truffle harvesting (TH), vellum (V), weed control (WC), working (W)
\end{minipage}\hspace{0.2cm}
  \begin{minipage}{0.73\linewidth}
  \hspace{-0.85cm}\scalebox{0.7}{
  \begin{cxt}
    \cxtName{$\Scon$}
    \att{0}
    \att{1}
    \att{2}
    \att{3}
    \att{4}
    \att{5}
    \att{6}
    \att{7}
    \att{8}
    \att{9}
    \obj{..x....x..}{Brahman cattle}
    \obj{..x....xxx}{European cattle}
    \obj{....x.....}{Guppy}
    \obj{.....x..xx}{alpaca}
    \obj{...x......}{bactrian camel}
    \obj{..x....x..}{bali cattle}
    \obj{.........x}{barbary dove}
    \obj{....x.....}{canary}
    \obj{....x....x}{cat}
    \obj{.x....x...}{chicken}
    \obj{x.......xx}{dog}
    \obj{...x.x.x..}{donkey}
    \obj{...x......}{dromedary}
    \obj{.x........}{duck}
    \obj{....x.....}{fancy mouse}
    \obj{....x.....}{fancy rat}
    \obj{......x...}{ferret}
    \obj{........x.}{fuegian dog}
    \obj{..x.......}{gayal}
    \obj{..x..x..xx}{goat}
%    \obj{......x...}{goldfish}
  \end{cxt}}
\scalebox{0.7}{
  \begin{tabular}{|l||c|c|c|c|c|c|c|c|c|c|}
    \hline
    goldfish&\phantom{$\times$}&\phantom{$\times$}&\phantom{$\times$}&\phantom{$\times$}&\phantom{$\times$}&\phantom{$\times$}&$\times$&\phantom{$\times$}&\phantom{$\times$}&\phantom{$\times$}\\\hline
    goose&\phantom{$\times$}&$\times$&\phantom{$\times$}&\phantom{$\times$}&\phantom{$\times$}&\phantom{$\times$}&\phantom{$\times$}&\phantom{$\times$}&\phantom{$\times$}&\phantom{$\times$}\\\hline
    guinea pig&\phantom{$\times$}&\phantom{$\times$}&\phantom{$\times$}&\phantom{$\times$}&$\times$&$\times$&\phantom{$\times$}&\phantom{$\times$}&\phantom{$\times$}&$\times$\\\hline
    guineafowl&\phantom{$\times$}&$\times$&\phantom{$\times$}&\phantom{$\times$}&\phantom{$\times$}&\phantom{$\times$}&\phantom{$\times$}&\phantom{$\times$}&\phantom{$\times$}&\phantom{$\times$}\\\hline
    hedgehog&\phantom{$\times$}&\phantom{$\times$}&\phantom{$\times$}&\phantom{$\times$}&\phantom{$\times$}&\phantom{$\times$}&\phantom{$\times$}&\phantom{$\times$}&\phantom{$\times$}&\phantom{$\times$}\\\hline
    horse&\phantom{$\times$}&\phantom{$\times$}&\phantom{$\times$}&$\times$&\phantom{$\times$}&$\times$&\phantom{$\times$}&$\times$&\phantom{$\times$}&\phantom{$\times$}\\\hline
    koi&\phantom{$\times$}&\phantom{$\times$}&\phantom{$\times$}&\phantom{$\times$}&\phantom{$\times$}&\phantom{$\times$}&$\times$&\phantom{$\times$}&\phantom{$\times$}&\phantom{$\times$}\\\hline
    lama&\phantom{$\times$}&\phantom{$\times$}&\phantom{$\times$}&\phantom{$\times$}&\phantom{$\times$}&$\times$&\phantom{$\times$}&\phantom{$\times$}&$\times$&$\times$\\\hline
    mink&\phantom{$\times$}&\phantom{$\times$}&\phantom{$\times$}&\phantom{$\times$}&$\times$&\phantom{$\times$}&\phantom{$\times$}&\phantom{$\times$}&\phantom{$\times$}&\phantom{$\times$}\\\hline
    muscovy duck&\phantom{$\times$}&$\times$&\phantom{$\times$}&\phantom{$\times$}&\phantom{$\times$}&\phantom{$\times$}&\phantom{$\times$}&\phantom{$\times$}&\phantom{$\times$}&\phantom{$\times$}\\\hline
    pig&\phantom{$\times$}&\phantom{$\times$}&\phantom{$\times$}&\phantom{$\times$}&$\times$&$\times$&$\times$&\phantom{$\times$}&$\times$&\phantom{$\times$}\\\hline
    pigeon&\phantom{$\times$}&\phantom{$\times$}&\phantom{$\times$}&\phantom{$\times$}&\phantom{$\times$}&\phantom{$\times$}&$\times$&\phantom{$\times$}&\phantom{$\times$}&\phantom{$\times$}\\\hline
    rabbit&\phantom{$\times$}&\phantom{$\times$}&\phantom{$\times$}&\phantom{$\times$}&$\times$&$\times$&\phantom{$\times$}&\phantom{$\times$}&\phantom{$\times$}&$\times$\\\hline
    sheep&\phantom{$\times$}&\phantom{$\times$}&$\times$&\phantom{$\times$}&$\times$&$\times$&$\times$&\phantom{$\times$}&$\times$&\phantom{$\times$}\\\hline
    silkmoth&\phantom{$\times$}&\phantom{$\times$}&\phantom{$\times$}&\phantom{$\times$}&$\times$&\phantom{$\times$}&\phantom{$\times$}&\phantom{$\times$}&\phantom{$\times$}&\phantom{$\times$}\\\hline
    silver fox&\phantom{$\times$}&\phantom{$\times$}&\phantom{$\times$}&\phantom{$\times$}&$\times$&\phantom{$\times$}&\phantom{$\times$}&\phantom{$\times$}&\phantom{$\times$}&\phantom{$\times$}\\\hline
    society finch&\phantom{$\times$}&\phantom{$\times$}&\phantom{$\times$}&\phantom{$\times$}&$\times$&\phantom{$\times$}&\phantom{$\times$}&\phantom{$\times$}&\phantom{$\times$}&\phantom{$\times$}\\\hline
    striped skunk&\phantom{$\times$}&\phantom{$\times$}&\phantom{$\times$}&\phantom{$\times$}&$\times$&\phantom{$\times$}&\phantom{$\times$}&\phantom{$\times$}&\phantom{$\times$}&\phantom{$\times$}\\\hline
    turkey&\phantom{$\times$}&$\times$&\phantom{$\times$}&\phantom{$\times$}&\phantom{$\times$}&\phantom{$\times$}&\phantom{$\times$}&\phantom{$\times$}&\phantom{$\times$}&\phantom{$\times$}\\\hline
    water buffalo&\phantom{$\times$}&\phantom{$\times$}&$\times$&\phantom{$\times$}&\phantom{$\times$}&\phantom{$\times$}&\phantom{$\times$}&$\times$&\phantom{$\times$}&$\times$\\\hline
    yak&\phantom{$\times$}&\phantom{$\times$}&\phantom{$\times$}&$\times$&\phantom{$\times$}&\phantom{$\times$}&\phantom{$\times$}&$\times$&$\times$&\phantom{$\times$}\\\bottomrule
  \end{tabular}}
  \end{minipage}

  % \begin{cxt}
  %   \cxtName{}
  %   \att{0}
  %   \att{1}
  %   \att{2}
  %   \att{3}
  %   \att{4}
  %   \att{5}
  %   \att{6}
  %   \att{7}
  %   \att{8}
  %   \att{9}
  %   \obj{.x........}{goose}
  %   \obj{....xx...x}{guinea pig}
  %   \obj{.x........}{guineafowl}
  %   \obj{..........}{hedgehog}
  %   \obj{...x.x.x..}{horse}
  %   \obj{......x...}{koi}
  %   \obj{.....x..xx}{lama}
  %   \obj{....x.....}{mink}
  %   \obj{.x........}{muscovy duck}
  %   \obj{....xxx.x.}{pig}
  %   \obj{......x...}{pigeon}
  %   \obj{....xx...x}{rabbit}
  %   \obj{..x.xxx.x.}{sheep}
  %   \obj{....x.....}{silkmoth}
  %   \obj{....x.....}{silver fox}
  %   \obj{....x.....}{society finch}
  %   \obj{....x.....}{striped skunk}
  %   \obj{.x........}{turkey}
  %   \obj{..x....x.x}{water buffalo}
  %   \obj{...x...xx.}{yak}
  % \end{cxt}

  \caption{Attributes of the \emph{Domestic} data set (left) and a
    factor (right).}
  \label{fig:factor}
\end{figure}

\section{Experimental Evaluation} 
To provide practical evidence for the applicability of the just
introduced conceptual scaling error, we conducted an experiment on
eleven data sets. In those we compared the classical errors, such as
Frobenius norm, to the conceptual scaling error. The data sets were, if not
otherwise specified, nominally scaled to a binary representation. Six
of them are available through the UCI\footnote{\scriptsize i)~\url{https://archive.ics.uci.edu/ml/datasets/Acute+Inflammations},\\
  ii)~\url{https://archive.ics.uci.edu/ml/datasets/Hayes-Roth},\\
  iii)~\url{https://archive.ics.uci.edu/ml/datasets/zoo},\\ 
  iv)~\url{https://archive.ics.uci.edu/ml/datasets/mushroom},\\
  v)~\url{https://archive.ics.uci.edu/ml/datasets/HIV-1+protease+cleavage},\\
  vi)~\url{https://archive.ics.uci.edu/ml/datasets/Plants} and
  \url{https://plants.sc.egov.usda.gov/java/}} \cite{uci} data sets
repository:
\begin{inparaenum}[i)]
\item \emph{Diagnosis} \cite{diagnosis} with \emph{temperature} scaled in intervals of [35.0 37.5) [37.5 40.0) [40.0 42.0],
\item \emph{Hayes-Roth} 
\item \emph{Zoo} 
\item \emph{Mushroom}
\item \emph{HIV-1ProteaseCleavage} \cite{HIV} and
\item \emph{Plant-Habitats}
\end{inparaenum}
four kaggle\footnote{\scriptsize \url{https://www.kaggle.com/},
  i)~\url{https://www.kaggle.com/odartey/top-chess-players} and
  \url{https://www.fide.com/},
ii)~\url{https://www.kaggle.com/brittabettendorf/berlin-airbnb-data/},
iii)~\url{https://www.kaggle.com/rajeevw/ufcdata},\\
iv)~\url{https://www.kaggle.com/shuyangli94/food-com-recipes-and-user-interactions}} data sets
\begin{inparaenum}[i)]
\item[vii)] \emph{Top-Chess-Players} with \emph{rating,rank,games,bith\_year} ordinally scaled,
\item[viii)] neighbourhood data from the \emph{Airbnb-Berlin} data sets,
\item[ix)] \emph{A\_fighter} and \emph{B\_fighter} from the
  \emph{UFC-Fights} data sets and
\item[x)] \emph{Recipes} \cite{recipes}.
\end{inparaenum}
The eleventh data set is generated from the Wikipedia list of
\emph{Domesticated Animals}.\footnote{\scriptsize
\url{https://en.wikipedia.org/w/index.php?title=List_of_domesticated_animals},
25.02.2020} This data set is also used for a qualitative analysis. We
summarized all data sets in~\cref{tab:error}, first and second major column.

\begin{table}[t]
  \centering
  \caption{Quantifying the Conceptual Scaling Error for approximations
    $\context_{\approx} = \Scon{\cdot}H$ of data sets $\context$ by
    Binary Matrix Factorization. Cells with '-' where not computed due
    to computational intractability. Density (D), Attribute Error
    (AE), Conceptual Scaling Error (CE), Hemming distance between
    $I_{\context}$ and $I_{\Scon{\cdot}H}$ relative to $|G|{\cdot}|M|$
     (H\%), Frobenius measure between
    $I_{\context}$ and $I_{\Scon{\cdot}H}$. }
\scalebox{0.8}{  \begin{tabular}{|l||r|r|r|r||r|r|r|r|r||r|r|r|r|r|}
    \hline
    &
    \multicolumn{4}{|c||}{Context $\context$} & 
    \multicolumn{5}{|c||}{Approximated Context $\context_{\approx} = \Scon{\cdot}H$} &
    \multicolumn{5}{|c|}{Respective Scale $\Scon$}\\

    &\multicolumn{1}{c|}{$|G|$} &\multicolumn{1}{|c|}{ $|M|$} &\multicolumn{1}{|c|}{ D} &\multicolumn{1}{|c||}{ $|\BV|$} &\multicolumn{1}{|c|}{Frob} &\multicolumn{1}{|c|}{ H\% }& \multicolumn{1}{|c|}{$|\BV|$}
    &\multicolumn{1}{|c|}{AE} &\multicolumn{1}{|c||}{CE} &\multicolumn{1}{|c|}{$|M|$} & \multicolumn{1}{|c|}{D} & \multicolumn{1}{|c|}{$|\BV|$}& \multicolumn{1}{|c|}{AE} & \multicolumn{1}{|c|}{CE} \\   \hline
Diagnosis&120&17&0.471&88&13.04&8.3&26&6&7&4&0.250&6&0&0\\
Hayes-Roth&132&18&0.218&215&16.40&11.3&33&8&26&4&0.350&12&3&8\\
Domestic&41&55&0.158&292&8.49&3.2&148&14&68&10&0.183&34&6&15\\
Zoo&101&43&0.395&4579&15.52&5.5&442&13&347&7&0.315&25&2&4\\
Chess&346&683&0.473&3211381&82.01&2.8&229585&246&224673&26&0.767&4334&24&4280\\
Mushroom&8124&119&0.193&238710&243.86&6.2&10742&48&10598&11&0.277&139&7&116\\
HIV-1PC&6590&162&0.055&115615&221.38&4.6&303&32&229&13&0.154&330&12&236\\
Plant-Habitats&34781&68&0.127&-&322.16&4.4&-&68&-&8&0.128&256&8&255\\
Airbnb-Berlin&22552&145&0.007&-&130.29&0.5&-&0&0&12&0.057&8&1&1\\
UFC-Fights&5144&1915&0.001&-&101.43&0.1&2&0&1&44&0.932&2&44&1\\
Recipes&178265&58&0.057&-&492.8&2.3&-&7&-&8&0.236&256&4&187 \\
    \hline
  \end{tabular}}
  \label{tab:error}
\end{table}

As dimension reduction method, we employ the \emph{binary matrix
factorization}~\cite{bmf} of the Nimfa~\cite{nimfa} framework. Their
algorithm is an adaption of the \emph{non-negative matrix
factorization}(NMF). In addition to the regular NMF a penalty and a
thresholding function are applied to binarize the output.  For any
given data set $\context$ two binary factors $\Scon,H$ with $\context
\approx \Scon \cdot H$ are computed.

The BMF factorization algorithm takes several parameters, such as
convergence $\lambda_{w},\lambda_{h}$, which we left at their default
value of 1.1. We increased the maximum number of iterations to 500 to
ensure convergence and conducted ten runs, of which took the best
fit. The target number of attribute (features) in $|M_{\Scon}|$ was
set approximately to $\sqrt{|M_{\context}|}$ to receive a data
dimension reduction of one magnitude.

We depicted the results, in particular the quality of the
factorizations, in~\cref{tab:error} (major column three and four)
where $\context_{\approx}=\Scon \cdot H$ is the BMF approximation of
$\context$. Our investigation considers standard measures, such as
Frobenius norm (\emph{Frob}) and relative Hamming distance
(\emph{H\%}), as well as the proposed conceptual scaling error
(\emph{CE}) and attribute scaling error (\emph{AE}). For the large
data sets, i.e., the last four in~\cref{tab:error}, we omitted
computing the number of concepts due to its computational
intractability, indicated by '-'. Therefore we were not able to
compute the conceptual scaling errors of the approximate data sets
$\context_\approx$. However, the conceptual scaling error of the
related scales $\Scon$ is independent of the computational
tractability of CE of $\context_\approx$.

We observe that the values for Frob and for H\% differ vastly among
the different data sets. For Example H\% varies from 0.1 to 11.3.  We
find that for all data sets $|\mathfrak{B}(\context)|$ is substantially larger
than $|\mathfrak{B}(\context_{\approx})|$, independently of the values of Frob
and H\%. Hence, BMF leads to a considerable loss of concepts.  When
comparing the conceptual and attribute scaling error to Frob and H\%,
we observe that the novel conceptual errors capture different aspects
than the classical matrix norm error.  For example, \emph{Domestic}
and \emph{Chess} have similar values for H\%, however, their error
values with respect to attributes and concepts differ
significantly. In detail, the ratio of $|CE|/|\mathfrak{B}(\context_\approx)$ is
0.98 for \emph{Chess} and 0.46 for \emph{Domestic}, and the ratio for
$|AE|/|M_\approx|$ is 0.36 for \emph{Chess} and .25 for
\emph{Domestic}.

While we do not know the number of concepts for \emph{Airbnb-Berlin},
we do know that conceptual scaling error of the related
$\context_{\approx}$ is 0 due to AE being 0 and \cref{prop:attr}. The
factorization of the \emph{UFC-Fights} produced an empty context
$\context_{\approx}$. Therefore, all attribute derivations in
$\context_{\approx}$ are the empty set, whose pre-image is an extent
of $\context$, hence, AE is 0. We suspect that BMF is unable to cope
with data sets that exhibit a very low density. It is noteworthy that
we cannot elude this conclusion from the value of the Frob and H\%. By
investigating the binary factor $\Scon$ using the conceptual scaling
error and the attribute error, we are able to detect the occurrence of
this phenomenon. In detail, we see that 44 out of 44 attributes are
inconsistent.

We can take from our investigation that low H\% and Frob values do not
guaranty good factorizations with respect to preserving the conceptual
structure of the original data set. In contrast, we claim that the
proposed scaling errors are capable of capturing such error to some
extent.  On a final note, we may point out that the conceptual scaling
errors enable a quantifiable comparison of a scaling $\Scon$ to the
original data set $\context$, despite different dimensionality.

\subsection{Qualitative Analysis}
The \emph{domestic} data set includes forty-one animals as objects and
fifty-five purposes for their domestication as attributes, such as
\emph{pets, hunting, meat, etc}. The resulting $\context$ has a total
of 2255 incidences and the corresponding concept lattice has 292
formal concepts. We applied the BMF algorithm as before, which
terminated after 69 iterations with the scale depicted
in~\cref{fig:factor}. The incidence of $\context_\approx\coloneqq\Scon
\cdot H$ has seventy-three wrong incidences, i.e., wrongfully present
or absent pairs, which results in H\% of 3.2. The corresponding concept lattice of $\context_\approx$ has 148 concepts, which is nearly half of $\mathfrak{B}(\context)$. Furthermore, out
of these 148 concepts there are only 80 correct, i.e.,  in
$\Ext(\context)$. This results in a conceptual scaling error of 68, which is in particular interesting in the light of the apparently low H\% error.

To pinpoint the particular errors, we employ i)-iii) from
\cref{sec:3.1}. The result of the first approach is visualized in
\cref{fig:err1} and displays the concept lattice of
$\sigma^{-1}[\Ext(\Scon)]$ in which the elements of
$\mathcal{E}^{\context}_{\sigma,\Scon}$ are highlighted in red. First,
we notice in the lattice diagram that the inconsistent extents
$\mathcal{E}^{\context}_{\sigma,\Scon}$ are primarily in the upper
part. Seven out of fifteen are derivations from attribute combinations
of $9,8$, and $5$. This indicates that the factorization was
especially inaccurate for those attributes. The attribute extents of
$6,4$, and $2$ are in $\mathcal{E}^{\context}_{\sigma,\Scon}$,
however, many of their combinations with other attributes result in
extents of $\Ext(\context)$.

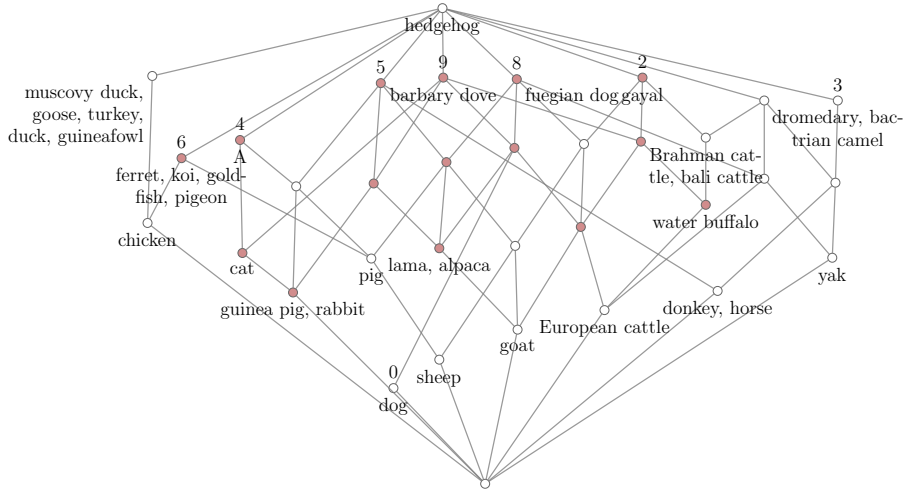
\begin{figure}[t]
  \hspace{-1cm}\scalebox{0.6}{\colorlet{mivertexcolor}{white}
\colorlet{jivertexcolor}{red!60!black}
\colorlet{vertexcolor}{white}
\colorlet{bordercolor}{black!60}
\colorlet{linecolor}{gray}
\tikzset{vertexbase/.style={semithick, shape=circle, inner sep=2pt, outer sep=0pt, draw=bordercolor},%
  vertex/.style={vertexbase, fill=vertexcolor!45},%
  mivertex/.style={vertexbase, fill=mivertexcolor!45},%
  jivertex/.style={vertexbase, fill=jivertexcolor!45},%
  divertex/.style={vertexbase, top color=mivertexcolor!45, bottom color=jivertexcolor!45},%
  conn/.style={-, thick, color=linecolor}%
}
\tikzstyle{l} = [text width=4cm,align=right]
\tikzstyle{n} = [text width=4cm,align=center]
\tikzstyle{r} = [text width=4cm,align=left]
\begin{tikzpicture}[scale=0.25, label distance = 0.1cm, font=\large]
  \begin{scope} %for scaling and the like
    \begin{scope} %draw vertices
      \foreach \nodename/\nodetype/\xpos/\ypos in {%
        0/vertex/-0.4043436293438205/11.618725868725917,
        1/vertex/-8.533301158301342/20.105791505791558,
        2/vertex/-4.4688223938225775/22.628571428571483,
        3/vertex/2.4688223938222436/25.29150579150584,
        4/vertex/10.171718146717964/27.024517374517423,
        5/jivertex/-17.4331081081083/28.585135135135175,
        6/vertex/20.198359073358887/28.725289575289622,
        7/vertex/-10.495463320463507/31.598455598455644,
        8/vertex/30.35955598455579/31.66853281853286,
        9/jivertex/-21.918050193050384/32.08899613899619,
        10/jivertex/-4.4688223938225065/32.5094594594595,
        11/vertex/2.25859073359058/32.71969111969116,
        12/jivertex/8.074999999999811/34.40154440154444,
        13/vertex/-30.32731660231679/34.751930501930545,
        14/jivertex/19.14720077220059/36.363706563706614,
        15/vertex/-17.152799227799413/38.0,
        16/jivertex/-10.285231660231844/38.25579150579155,
        17/vertex/30.63986486486469/38.32586872586877,
        18/vertex/24.332915057914875/38.676254826254876,
        19/jivertex/-3.8381274131276015/40.147876447876484,
        20/jivertex/-27.31399613899633/40.498262548262595,
        21/jivertex/2.1885135135133282/41.40926640926646,
        22/vertex/8.35530888030869/41.759652509652554,
        23/jivertex/13.400868725868548/41.96988416988422,
        24/jivertex/-22.12828185328204/42.11003861003866,
        25/vertex/19.147200772200584/42.32027027027031,
        26/vertex/-29.906853281853472/47.791891891891936,
        27/vertex/24.332915057914867/45.613899613899655,
        28/vertex/30.85009652509634/45.613899613899655,
        29/jivertex/-9.654536679536868/47.1555984555985,
        30/jivertex/2.3987451737449845/47.5059845559846,
        31/jivertex/-4.118436293436481/47.64613899613904,
        32/jivertex/13.541023166022974/47.64613899613904,
        33/vertex/-4.188513513513705/53.80733590733595
      } \node[\nodetype] (\nodename) at (\xpos, \ypos) {};
    \end{scope}        
    \begin{scope} %draw connections
      \path (0) edge[conn,draw opacity=0.8] (13);
      \path (6) edge[conn,draw opacity=0.8] (29);
      \path (25) edge[conn,draw opacity=0.8] (27);
      \path (0) edge[conn,draw opacity=0.8] (4);
      \path (12) edge[conn,draw opacity=0.8] (22);
      \path (19) edge[conn,draw opacity=0.8] (30);
      \path (10) edge[conn,draw opacity=0.8] (16);
      \path (14) edge[conn,draw opacity=0.8] (23);
      \path (0) edge[conn,draw opacity=0.8] (6);
      \path (3) edge[conn,draw opacity=0.8] (10);
      \path (11) edge[conn,draw opacity=0.8] (19);
      \path (18) edge[conn,draw opacity=0.8] (27);
      \path (0) edge[conn,draw opacity=0.8] (1);
      \path (3) edge[conn,draw opacity=0.8] (11);
      \path (0) edge[conn,draw opacity=0.8] (3);
      \path (13) edge[conn,draw opacity=0.8] (20);
      \path (29) edge[conn,draw opacity=0.8] (33);
      \path (26) edge[conn,draw opacity=0.8] (33);
      \path (24) edge[conn,draw opacity=0.8] (33);
      \path (17) edge[conn,draw opacity=0.8] (28);
      \path (4) edge[conn,draw opacity=0.8] (18);
      \path (5) edge[conn,draw opacity=0.8] (16);
      \path (16) edge[conn,draw opacity=0.8] (31);
      \path (27) edge[conn,draw opacity=0.8] (33);
      \path (13) edge[conn,draw opacity=0.8] (26);
      \path (6) edge[conn,draw opacity=0.8] (17);
      \path (9) edge[conn,draw opacity=0.8] (24);
      \path (23) edge[conn,draw opacity=0.8] (32);
      \path (15) edge[conn,draw opacity=0.8] (29);
      \path (11) edge[conn,draw opacity=0.8] (22);
      \path (1) edge[conn,draw opacity=0.8] (21);
      \path (28) edge[conn,draw opacity=0.8] (33);
      \path (10) edge[conn,draw opacity=0.8] (21);
      \path (4) edge[conn,draw opacity=0.8] (12);
      \path (5) edge[conn,draw opacity=0.8] (15);
      \path (21) edge[conn,draw opacity=0.8] (31);
      \path (19) edge[conn,draw opacity=0.8] (29);
      \path (3) edge[conn,draw opacity=0.8] (12);
      \path (22) edge[conn,draw opacity=0.8] (30);
      \path (16) edge[conn,draw opacity=0.8] (29);
      \path (30) edge[conn,draw opacity=0.8] (33);
      \path (0) edge[conn,draw opacity=0.8] (2);
      \path (17) edge[conn,draw opacity=0.8] (27);
      \path (22) edge[conn,draw opacity=0.8] (32);
      \path (12) edge[conn,draw opacity=0.8] (23);
      \path (2) edge[conn,draw opacity=0.8] (7);
      \path (4) edge[conn,draw opacity=0.8] (14);
      \path (0) edge[conn,draw opacity=0.8] (5);
      \path (15) edge[conn,draw opacity=0.8] (24);
      \path (18) edge[conn,draw opacity=0.8] (30);
      \path (25) edge[conn,draw opacity=0.8] (32);
      \path (23) edge[conn,draw opacity=0.8] (31);
      \path (31) edge[conn,draw opacity=0.8] (33);
      \path (7) edge[conn,draw opacity=0.8] (20);
      \path (10) edge[conn,draw opacity=0.8] (19);
      \path (12) edge[conn,draw opacity=0.8] (21);
      \path (32) edge[conn,draw opacity=0.8] (33);
      \path (8) edge[conn,draw opacity=0.8] (18);
      \path (9) edge[conn,draw opacity=0.8] (31);
      \path (7) edge[conn,draw opacity=0.8] (19);
      \path (20) edge[conn,draw opacity=0.8] (33);
      \path (7) edge[conn,draw opacity=0.8] (15);
      \path (14) edge[conn,draw opacity=0.8] (25);
      \path (5) edge[conn,draw opacity=0.8] (9);
      \path (0) edge[conn,draw opacity=0.8] (8);
      \path (8) edge[conn,draw opacity=0.8] (17);
      \path (21) edge[conn,draw opacity=0.8] (30);
      \path (2) edge[conn,draw opacity=0.8] (11);
    \end{scope}
    \begin{scope} %add labels  
      \foreach \nodename/\labelpos/\labelopts/\labelcontent in {%
        1/below/n/{dog},
        1/above/n/{0}, 
        2/below/n/{sheep},
        3/below/n/{goat},
        4/below/n/{European cattle},
        5/below/n/{guinea pig, rabbit},
        6/below/n/{donkey, horse},
        7/below/n/{pig},
        8/below/n/{yak},
        9/below/n/{cat},
        10/below/n/{lama, alpaca},
        13/below/n/{chicken},
        14/below/n/{water buffalo},
        20/below/n/{ferret, koi, goldfish, pigeon},
        20/above/n/{6},
        24/below/n/{A},
        24/above/n/{4},
        25/below/n/{Brahman cattle, bali cattle},
        26/below left/l/{muscovy duck, goose, turkey, duck, guineafowl},
        28/below/n/{dromedary, bactrian camel},
        28/above/n/{3},
        29/above/n/{5},
        30/below right/r/{fuegian dog},
        30/above/n/{8},  
        31/below/n/{barbary dove},
        31/above/n/{9},
        32/below/n/{gayal},
        32/above/n/{2},
        33/below/n/{hedgehog}
      } \coordinate[label={[\labelopts]\labelpos:{\labelcontent}}](c) at (\nodename);
    \end{scope}
  \end{scope}
\end{tikzpicture}}
  \caption{Concept lattice of the \emph{Domestic} scale context. $\mathcal{E}^{\context}_{\sigma,\Scon}$ is
    indicated in red.
  {A=$\{$society finch, silkmoth, fancy mouse, mink, fancy rat, striped skunk, Guppy, canary, silver fox$\}$}}
  \label{fig:err1}
\end{figure}

The resulting lattices of applying approach ii) are depicted in
\cref{fig:err2}, the consistent lattice of 
$\sigma^{-1}[\Ext(\Scon)]|_{\Ext(\context)}$ on the left and its join-complement on the right.
The consistent part has nineteen concepts, all depicted attributes can
considered in conjunctive normalform. The join-complement consists of
twenty-two concepts of which the incorrect ones are marked in red.

% objects with no consistent features 20 of 41
% "fuegian dog" "lama" "ferret" "alpaca" "society finch" "silkmoth"
%   "fancy mouse" "koi" "hedgehog" "mink" "fancy rat" "striped skunk"
%   "goldfish" "barbary dove" "Guppy" "canary" "pigeon" "silver fox"
%   "cat" "gayal"}
% Attributes not present 13 of 55
% #{"bait" "mining" "tusks" "urine" "messenger" "animal feed"
%   "execution" "alarming" "hair" "warmth" "silk" "servicing" "fur"}
Based on this representation, we can see that twenty out of the
forty-one objects have no associated attributes.  These include
objects like \emph{lama, alpaca} or \emph{barbary dove}, which we have
also indirectly identified by i) as derivations of
$5,8,9$. Furthermore, we see that thirteen out of the fifty-five
attributes of $\context$ are not present in any conjunctive
attributes.  These attributes include domestication purposes like
\emph{tusk, fur}, or \emph{hair}. Out of our expertise we suppose that
these could form a meaningful cluster in the specific data realm.  In
the join-complement, we can identify the attributes 5,8,9 as being
highly inconsistently scaled, as already observed in the paragraph
above.

\begin{figure}[t]
  \begin{minipage}{0.5\linewidth}
    \hspace{-1cm}\scalebox{0.4}{\colorlet{mivertexcolor}{white}
\colorlet{jivertexcolor}{white}
\colorlet{vertexcolor}{white}
\colorlet{bordercolor}{black!80}
\colorlet{linecolor}{gray}
\tikzset{vertexbase/.style={semithick, shape=circle, inner sep=2pt, outer sep=0pt, draw=bordercolor},%
  vertex/.style={vertexbase, fill=vertexcolor!45},%
  mivertex/.style={vertexbase, fill=mivertexcolor!45},%
  jivertex/.style={vertexbase, fill=jivertexcolor!45},%
  divertex/.style={vertexbase, top color=mivertexcolor!45, bottom color=jivertexcolor!45},%
  conn/.style={-, thick, color=linecolor}%
}
\tikzstyle{n2} = [text width=4cm,align=center,label distance=0.3cm]
\tikzstyle{n} = [text width=3cm,align=center]
\begin{tikzpicture}[scale=0.5,label distance = 0.1cm,font=\large]
  \begin{scope} %for scaling and the like
    \begin{scope} %draw vertices
      \foreach \nodename/\nodetype/\xpos/\ypos in {%
        0/vertex/1.3218146718146713/6.042084942084941,
        1/vertex/-12.0/14.0,
        2/vertex/-2.271235521235525/14.101158301158335,
        3/vertex/-7.011003861003864/14.292277992278045,
        4/vertex/4.532625482625484/14.559845559845584,
        5/vertex/7.705212355212353/14.86563706563711,
        6/vertex/16.0/15.0,
        7/vertex/-14.0/16.0,
        8/vertex/12.0/18.0,
        9/vertex/-9.189768339768346/18.573359073359125,
        10/vertex/-2.0654440154440174/18.088030888030926,
        11/vertex/1.3982625482625437/18.879150579150597,
        12/vertex/4.494401544401541/18.955598455598476,
        13/vertex/16.0/21.0,
        14/vertex/-14.0/20.0,
        15/vertex/-8.0/22.0,
        16/vertex/5.679343629343627/23.16023166023172,
        17/vertex/1.283590733590735/23.389575289575305,
        18/vertex/1.1689189189189193/30.44864864864865
      } \node[\nodetype] (\nodename) at (\xpos, \ypos) {};
    \end{scope}
    \begin{scope} %draw connections
      \path (16) edge[conn] (18);
      \path (13) edge[conn] (18); 
      \path (0) edge[conn] (8);
      \path (9) edge[conn] (15);
      \path (6) edge[conn] (13);
      \path (14) edge[conn] (18);
      \path (0) edge[conn] (3);
      \path (12) edge[conn] (16);
      \path (8) edge[conn] (18);
      \path (4) edge[conn] (11);
      \path (15) edge[conn] (18);
      \path (17) edge[conn] (18);
      \path (7) edge[conn] (14);
      \path (5) edge[conn] (12);
      \path (0) edge[conn] (6);
      \path (0) edge[conn] (5);
      \path (10) edge[conn] (17);    
      \path (4) edge[conn] (12);
      \path (2) edge[conn] (10);
      \path (1) edge[conn] (7);
      \path (1) edge[conn] (9);
      \path (2) edge[conn] (15);
      \path (0) edge[conn] (1);
      \path (2) edge[conn] (11);
      \path (0) edge[conn] (4);
      \path (3) edge[conn] (9);
      \path (12) edge[conn] (17);
      \path (0) edge[conn] (2);
      \path (11) edge[conn] (17);
    \end{scope}                                             %17
    \begin{scope} %add labels                               %16->12->5,4->0;;;;;13->6->0;;;;14->7->1->0;;; 15->9,2;; 9->1,3;;; 17->10,11,12;; 12->5,4
      \foreach \nodename/\labelpos/\labelopts/\labelcontent in {%
        1/below/n/{sheep},
        2/below/n/{European cattle},
        3/below/n/{goat},
        3/above/n/{...,Sk,CL},
        4/below/n/{yak},
        5/below/n/{donkey,\\ horse},
        5/above/n/{...,LM,Dr,Ma,WC,Ra},
        6/below/n/{chicken},
        6/above/n/{...,Fe,O,WC,Le,Ra,Fi},
        7/below/n/{pig},
        7/above/n/{... G,Le,Fi},
        8/below/n/{dog},
        8/above/n/{dog'},
        9/above/n/{...,Ma},
        10/below/n/{Brahman cattle, bali cattle, water buffalo},
        10/above/n/{...,Du,Dr,Sh, Ho},
        11/above/n2/{...,Du,Sh,Ho,F, G,Ra,Mo,Pe,Fi},
        13/below/n/{muscovy duck, goose, turkey, duck, guineafowl},
        13/above/n/{Eg,Me,Sh,\\ Ma,G,Pe,PC},
        14/below/n/{guinea pig, rabbit},
        14/above/n/{Me,Sh,Ma, WC,Ra,Re,Pe},
        15/above/n/{V,Mi,LM,Me, Sh,Ho,F,G, WC,Ra,Pe,Fi},
        16/below/n/{dromedary, bactrian camel},
        16/above/n/{Pa,Mi,Me, Mo,Pe},
        17/above/n/{Mi,Me,Pl,W},
        18/below/n/{T1}
      } \coordinate[label={[\labelopts]\labelpos:{\labelcontent}}](c) at (\nodename);
    \end{scope}
  \end{scope}
\end{tikzpicture}}
  \end{minipage}\hspace{-5mm}
  \begin{minipage}{.49\linewidth}
    \scalebox{0.48}{\colorlet{mivertexcolor}{white}
\colorlet{jivertexcolor}{red}
\colorlet{vertexcolor}{mivertexcolor!50}
\colorlet{bordercolor}{black!80}
\colorlet{linecolor}{gray}
\tikzset{vertexbase/.style={semithick, shape=circle, inner sep=2pt, outer sep=0pt, draw=bordercolor},%
  vertex/.style={vertexbase, fill=vertexcolor!45},%
  mivertex/.style={vertexbase, fill=mivertexcolor!45},%
  jivertex/.style={vertexbase, fill=jivertexcolor!45},%
  divertex/.style={vertexbase, top color=mivertexcolor!45, bottom color=jivertexcolor!45},%
  conn/.style={-, thick, color=linecolor}%
}
\tikzstyle{n2} = [text width=4cm,align=center,label distance=0.3cm]
\tikzstyle{l} = [text width=4cm,align=right]
\tikzstyle{n} = [text width=4cm,align=center]
\tikzstyle{r} = [text width=4cm,align=left]
\begin{tikzpicture}[scale=0.3, label distance=0.1cm, font=\large]
  \begin{scope} %for scaling and the like
    \begin{scope} %draw vertices
      \foreach \nodename/\nodetype/\xpos/\ypos in {%
        0/vertex/0.6912162162162261/4.94594594594605,
        1/vertex/0.6912162162162261/11.635135135135243,
        2/vertex/-5.37364864864864/13.909459459459569,
        3/vertex/7.647972972972987/19.082432432432544,
        4/jivertex/16.47770270270272/19.79594594594606,
        5/jivertex/1.003378378378386/20.46486486486498,
        6/vertex/-5.284459459459455/20.68783783783795,
        7/jivertex/-9.52094594594594/21.133783783783898,
        8/jivertex/20.669594594594614/23.631081081081195,
        9/jivertex/-4.080405405405401/26.618918918919036,
        10/vertex/15.808783783783802/26.75270270270282,
        11/jivertex/0.82500000000001/27.198648648648764,
        12/jivertex/7.469594594594611/27.198648648648764,
        13/vertex/-9.342567567567563/27.555405405405523,
        14/jivertex/-12.642567567567564/27.7337837837839,
        15/jivertex/20.402027027027046/29.517567567567685,
        16/jivertex/15.645270270270288/35.479729729729847,
        17/jivertex/1.7168918918919047/34.08513513513525,
        18/jivertex/-4.124999999999993/33.129729729729846,
        19/jivertex/-12.731756756756752/33.26351351351363,
        20/jivertex/7.246621621621635/33.30810810810823,
        21/vertex/1.449324324324337/38.97162162162174
      } \node[\nodetype] (\nodename) at (\xpos, \ypos) {};
    \end{scope}
    \begin{scope} %draw connections
      \path (20) edge[conn] (21);
      \path (5) edge[conn] (11);
      \path (8) edge[conn] (15);
      \path (16) edge[conn] (21);
      \path (14) edge[conn] (17);
      \path (6) edge[conn] (13);
      \path (1) edge[conn] (6);
      \path (11) edge[conn] (20);
      \path (7) edge[conn] (14);
      \path (13) edge[conn] (19);
      \path (1) edge[conn] (3);
      \path (13) edge[conn] (18);
      \path (11) edge[conn] (18);
      \path (9) edge[conn] (18);
      \path (6) edge[conn] (11);
      \path (14) edge[conn] (19);
      \path (10) edge[conn] (15);
      \path (2) edge[conn] (7);
      \path (19) edge[conn] (21);
      \path (4) edge[conn] (10);
      \path (0) edge[conn] (2);
      \path (3) edge[conn] (11);
      \path (7) edge[conn] (13);
      \path (10) edge[conn] (20);
      \path (2) edge[conn] (5);
      \path (4) edge[conn] (12);
      \path (9) edge[conn] (17);
      \path (2) edge[conn] (6);
      \path (17) edge[conn] (21);
      \path (3) edge[conn] (16);
      \path (18) edge[conn] (21);
      \path (0) edge[conn] (1);
      \path (5) edge[conn] (12);
      \path (12) edge[conn] (17);
      \path (8) edge[conn] (17);
      \path (0) edge[conn] (4);
      \path (3) edge[conn] (10);
      \path (12) edge[conn] (20);
      \path (5) edge[conn] (9);
      \path (4) edge[conn] (8);
      \path (15) edge[conn] (21);
      \path (7) edge[conn] (9);
    \end{scope}
    \begin{scope} %add labels
      \foreach \nodename/\labelpos/\labelopts/\labelcontent in {%
        1/below/n/{sheep},
        2/below/n/{goat},
        3/below/n/{pig},
        4/below/n/{guinea pig, rabbit},
        5/below/n/{lama, alpaca},
        7/below/n/{European\\ cattle},
        8/below/n/{cat},
        9/below/n/{dog},
        14/below/n/{water buffalo},
        15/below/n/{A},
        15/above/n/{(4)},
        16/below/n/{ferret, koi, goldfish, pigeon, chicken},
        16/above/n/{(6)},
        17/below/n/{barbary dove},
        17/above/n/{(9)},
        18/below/n2/{fuegian dog,\\ yak},
        18/above/n/{(8)},
        19/below/n/{Brahman cattle,\\ bali cattle, gayal},
        19/above/n/{(2)},
        20/below/n2/{donkey,\\ horse},
        20/above/n/{(5)},
        21/below/n/{T2}
      } \coordinate[label={[\labelopts]\labelpos:{\labelcontent}}](c) at (\nodename);
    \end{scope}
  \end{scope}
\end{tikzpicture}}
  \end{minipage}
  % \begin{minipage}{.49\linewidth}
  %   \scalebox{0.15}{\input{animal-scale-invalid-exts.tikz}}
  % \end{minipage}
  \caption{The concept lattice of all valid extents of the scale
    $\Scon$ (left, in conjunctive normalform in $\SH(\context)$) and
    its join-complement (right) of the
    \emph{Domestic} data set. Extents in the lattice drawing of the
    join-complement that are extents not in the \emph{Domestic} context are highlighted in red.
    {dog'=$\{$Ed,TH,Pa,Gu,He,Dr,Sp,Me,Sh,Po,F,W,Re,G,ND,Le,Ra,Hu,Se,Th,Pe,PC,Fi$\}$ A=$\{$society finch, silkmoth, fancy mouse,
      mink, fancy rat, striped skunk, Guppy, canary, silver fox$\}$,
      T1=$\{$fuegian dog, lama, ferret, alpaca, society finch,
      silkmoth, fancy mouse, koi, hedgehog, mink, fancy rat, striped
      skunk, goldfish, barbary dove, Guppy, canary, pigeon, silver
      fox, cat, gayal$\}$, T2=$\{$dromedary, muscovy duck, bactrian
      camel, goose, turkey, hedgehog, duck, guineafowl$\}$}}
  \label{fig:err2}
\end{figure}

The third approach results in a scale $\Scon_{O}$ of four consistent
attributes and a scale $\Scon_{\hat{O}}$ of six non-consistent
attributes. The scale $\Scon_{O}$ has seven concepts and the scale
$\Scon_{\hat{O}}$ has twenty-two. We may note that the concept lattice
of $\Scon_{\hat{O}}$ is identical to the join-complement of the
previous approach. In general this is not the case, one cannot even
assume isomorphy. While the concept lattice of $\Scon_{O}$ does miss
some of the consistent extents of
$\sigma^{-1}[\Ext(\Scon)]|_{\Ext(\context)}$, we claim that the
combination of $\Scon_{O}$ and $\Scon_{\hat{O}}$ still provides a good
overview of the factorization shortcomings.

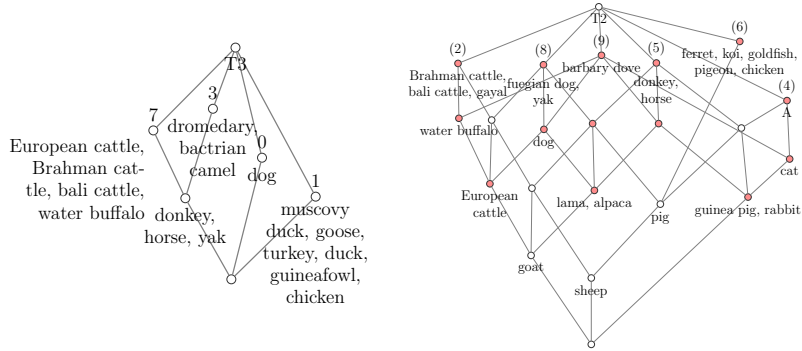
\begin{figure}[t]
  \begin{minipage}{.5\linewidth}
%\hspace{-1cm}                   
\scalebox{0.6}{\colorlet{mivertexcolor}{white}
\colorlet{jivertexcolor}{white}
\colorlet{vertexcolor}{white}
\colorlet{bordercolor}{black!80}
\colorlet{linecolor}{gray}
\tikzset{vertexbase/.style={semithick, shape=circle, inner sep=2pt, outer sep=0pt, draw=bordercolor},%
  vertex/.style={vertexbase, fill=vertexcolor!45},%
  mivertex/.style={vertexbase, fill=mivertexcolor!45},%
  jivertex/.style={vertexbase, fill=jivertexcolor!45},%
  divertex/.style={vertexbase, top color=mivertexcolor!45, bottom color=jivertexcolor!45},%
  conn/.style={-, thick, color=linecolor}%
}
\tikzstyle{n2} = [text width=4cm,align=center,label distance = 0.3cm]
\tikzstyle{n} = [text width=2.5cm,align=center]
\tikzstyle{d} = [text width=3cm,align=right]
\begin{tikzpicture}[scale=0.6, label distance =0.1cm, font=\large]
  \begin{scope} %for scaling and the like
    \begin{scope} %draw vertices
      \foreach \nodename/\nodetype/\xpos/\ypos in {%
        0/vertex/-0.11926653970573042/1.4969629254185257,
        1/vertex/-1.8178261914700151/4.49303342228053,
        2/vertex/2.9947594885287945/4.55201118796679,
        3/vertex/1.0131065614704617/5.991068670711532,
        4/vertex/-3.0/7.0,
        5/vertex/-0.8/7.8,
        6/vertex/0.03407565107854538/10.060534503063467
      } \node[\nodetype] (\nodename) at (\xpos, \ypos) {};
    \end{scope}
    \begin{scope} %draw connections
      \path (1) edge[conn] (5);
      \path (2) edge[conn] (6);
      \path (0) edge[conn] (3);
      \path (0) edge[conn] (1);
      \path (1) edge[conn] (4);
      \path (0) edge[conn] (2);
      \path (4) edge[conn] (6);
      \path (3) edge[conn] (6);
      \path (5) edge[conn] (6);
    \end{scope}
    \begin{scope} %add labels
      \foreach \nodename/\labelpos/\labelopts/\labelcontent in {%
        1/below/n/{donkey, horse, yak},
        2/below/n/{muscovy duck, goose, turkey, duck, guineafowl, chicken},
        2/above/n/{1},
        3/below/n/{dog},
        3/above/n/{0},
        4/below left/d/{European cattle, Brahman cattle, bali cattle, water buffalo},
        4/above/n/{7},
        5/below/n/{dromedary, bactrian camel},
        5/above/n/{3},
        6/below/n/{T3}
      } \coordinate[label={[\labelopts]\labelpos:{\labelcontent}}](c) at (\nodename);
    \end{scope}
  \end{scope}
\end{tikzpicture}}    
  \end{minipage}\hspace{-1cm}
  \begin{minipage}{.49\linewidth}
    \scalebox{0.44}{\colorlet{mivertexcolor}{white}
\colorlet{jivertexcolor}{red}
\colorlet{vertexcolor}{mivertexcolor!50}
\colorlet{bordercolor}{black!80}
\colorlet{linecolor}{gray}
\tikzset{vertexbase/.style={semithick, shape=circle, inner sep=2pt, outer sep=0pt, draw=bordercolor},%
  vertex/.style={vertexbase, fill=vertexcolor!45},%
  mivertex/.style={vertexbase, fill=mivertexcolor!45},%
  jivertex/.style={vertexbase, fill=jivertexcolor!45},%
  divertex/.style={vertexbase, top color=mivertexcolor!45, bottom color=jivertexcolor!45},%
  conn/.style={-, thick, color=linecolor}%
}
\tikzstyle{n2} = [text width=4cm,align=center,label distance=0.3cm]
\tikzstyle{l} = [text width=4cm,align=right]
\tikzstyle{n} = [text width=4cm,align=center]
\tikzstyle{r} = [text width=4cm,align=left]
\begin{tikzpicture}[scale=0.3, label distance=0.1cm, font=\large]
  \begin{scope} %for scaling and the like
    \begin{scope} %draw vertices
      \foreach \nodename/\nodetype/\xpos/\ypos in {%
        0/vertex/0.6912162162162261/4.94594594594605,
        1/vertex/0.6912162162162261/11.635135135135243,
        2/vertex/-5.37364864864864/13.909459459459569,
        3/vertex/7.647972972972987/19.082432432432544,
        4/jivertex/16.47770270270272/19.79594594594606,
        5/jivertex/1.003378378378386/20.46486486486498,
        6/vertex/-5.284459459459455/20.68783783783795,
        7/jivertex/-9.52094594594594/21.133783783783898,
        8/jivertex/20.669594594594614/23.631081081081195,
        9/jivertex/-4.080405405405401/26.618918918919036,
        10/vertex/15.808783783783802/26.75270270270282,
        11/jivertex/0.82500000000001/27.198648648648764,
        12/jivertex/7.469594594594611/27.198648648648764,
        13/vertex/-9.342567567567563/27.555405405405523,
        14/jivertex/-12.642567567567564/27.7337837837839,
        15/jivertex/20.402027027027046/29.517567567567685,
        16/jivertex/15.645270270270288/35.479729729729847,
        17/jivertex/1.7168918918919047/34.08513513513525,
        18/jivertex/-4.124999999999993/33.129729729729846,
        19/jivertex/-12.731756756756752/33.26351351351363,
        20/jivertex/7.246621621621635/33.30810810810823,
        21/vertex/1.449324324324337/38.97162162162174
      } \node[\nodetype] (\nodename) at (\xpos, \ypos) {};
    \end{scope}
    \begin{scope} %draw connections
      \path (20) edge[conn] (21);
      \path (5) edge[conn] (11);
      \path (8) edge[conn] (15);
      \path (16) edge[conn] (21);
      \path (14) edge[conn] (17);
      \path (6) edge[conn] (13);
      \path (1) edge[conn] (6);
      \path (11) edge[conn] (20);
      \path (7) edge[conn] (14);
      \path (13) edge[conn] (19);
      \path (1) edge[conn] (3);
      \path (13) edge[conn] (18);
      \path (11) edge[conn] (18);
      \path (9) edge[conn] (18);
      \path (6) edge[conn] (11);
      \path (14) edge[conn] (19);
      \path (10) edge[conn] (15);
      \path (2) edge[conn] (7);
      \path (19) edge[conn] (21);
      \path (4) edge[conn] (10);
      \path (0) edge[conn] (2);
      \path (3) edge[conn] (11);
      \path (7) edge[conn] (13);
      \path (10) edge[conn] (20);
      \path (2) edge[conn] (5);
      \path (4) edge[conn] (12);
      \path (9) edge[conn] (17);
      \path (2) edge[conn] (6);
      \path (17) edge[conn] (21);
      \path (3) edge[conn] (16);
      \path (18) edge[conn] (21);
      \path (0) edge[conn] (1);
      \path (5) edge[conn] (12);
      \path (12) edge[conn] (17);
      \path (8) edge[conn] (17);
      \path (0) edge[conn] (4);
      \path (3) edge[conn] (10);
      \path (12) edge[conn] (20);
      \path (5) edge[conn] (9);
      \path (4) edge[conn] (8);
      \path (15) edge[conn] (21);
      \path (7) edge[conn] (9);
    \end{scope}
    \begin{scope} %add labels
      \foreach \nodename/\labelpos/\labelopts/\labelcontent in {%
        1/below/n/{sheep},
        2/below/n/{goat},
        3/below/n/{pig},
        4/below/n/{guinea pig, rabbit},
        5/below/n/{lama, alpaca},
        7/below/n/{European\\ cattle},
        8/below/n/{cat},
        9/below/n/{dog},
        14/below/n/{water buffalo},
        15/below/n/{A},
        15/above/n/{(4)},
        16/below/n/{ferret, koi, goldfish, pigeon, chicken},
        16/above/n/{(6)},
        17/below/n/{barbary dove},
        17/above/n/{(9)},
        18/below/n2/{fuegian dog,\\ yak},
        18/above/n/{(8)},
        19/below/n/{Brahman cattle,\\ bali cattle, gayal},
        19/above/n/{(2)},
        20/below/n2/{donkey,\\ horse},
        20/above/n/{(5)},
        21/below/n/{T2}
      } \coordinate[label={[\labelopts]\labelpos:{\labelcontent}}](c) at (\nodename);
    \end{scope}
  \end{scope}
\end{tikzpicture}}
    %% Since it is identical, we use the same picture ;) 
    %\scalebox{0.4}{\input{animal-scale-invalid-attr.tikz}}
  \end{minipage}  
  \caption{The concept lattice of all valid (left) and invalid (right)
    attributes of the \emph{Domestic} scale-measure.  Extents in the
    lattice drawing of the invalid attributes that are not extents in
    the \emph{Domesticated Animals} context are marked in red.
    {A=$\{$society finch, silkmoth, fancy mouse, mink, fancy rat,
      striped skunk, Guppy, canary, silver fox$\}$ T3= $\{$fuegian
      dog, lama, sheep, ferret, pig, alpaca, society finch, goat,
      silkmoth, fancy mouse, koi, guinea pig, rabbit, hedgehog, mink,
      fancy rat, striped skunk, goldfish, barbary dove, Guppy, canary,
      pigeon, silver fox, cat, gayal$\}$ T2=$\{$dromedary, muscovy
      duck, bactrian camel, goose, turkey, hedgehog, duck,
      guineafowl$\}$}}
  \label{fig:err3}
\end{figure}

\section{Related Work}
To cope with large data sets, a multitude of methods was introduced to
reduce the dimensionality. One such method is the factorization of a
data set $\context$ into two factors (scales) $\Scon,H$ whose product
$\Scon \cdot H$ estimates $\context$. For binary data sets, a related
problem is known as the \emph{discrete basis
problem}~\cite{descreteBasis}: For a given $k\in\mathbb{N}$, compute
two binary factors $\Scon \in \mathbb{B}^{n\times k}$ and $H\in
\mathbb{B}^{k\times m}$ for which $\|K-SH\|$ is minimal. This problem
is known to be NP-hard. This hardness result lead to the development
of several approximation algorithms \cite{bmf, FCABMF}. For example,
one approach uses formal concepts as attributes of $\Scon$
\cite{FCABMF} and objects of $H$. It is shown that a solution to the
discrete basis problem can be given in terms of this
representation. However, since the initial problem is still NP-hard,
this approach is computationally intractable for large data sets.

The BMF approach~\cite{bmf} adapts a non-negative matrix factorization
by using a penalty and thresholding function. This algorithm optimizes
two initial matrices to minimize the error $\|K-SH\|$. This procedure
does compute an approximation to the solution of the discrete basis
problem and has lower computational cost. An additional drawback of
the BMF algorithm is that it may introduce closed objects sets in
$\Scon\cdot H$ that are not closed in the data set $K$. The main idea
of the conceptual scaling error is the efficient computation and
quantification of said closed sets.

Other evaluations of BMF, besides $\|K-S\cdot H\|$, have
been considered~\cite{NMFApplication2}. They investigate the quality of
implications in $S\cdot H$ for some classification task. Additionally, they
use different measures~\cite{MeasuresNMF}, e.g., \emph{fidelity}
and \emph{descriptive loss}. Other statistical approaches often find
\emph{euclidean loss}, \emph{Kullback-Leibler} divergence,
\emph{Residual Sum of Squares}, adequate.
%
%%%%%%%%%%%%%%%
All previously mentioned evaluation criteria do not account for the
complete conceptual structure of the resulting data set. Moreover,
they are not intrinsically able to pinpoint to the main error portions
of the resulting scale data sets. Furthermore, approaches based on the
computation of implications are infeasible for larger data set. An
advantage of our approach is the polynomial estimation of the
conceptual error in the size of $\Scon$ and $\context$ through the
attribute scaling error.

\section{Conclusion}
With our work we have presented a new approach to evaluate dimension
reduction methods in particular and data scaling methods in general.
The proposed conceptual scaling error was derived from a natural
notion of continuity, as used in scale-measures within the realm of
FCA. Beyond the quantification of the conceptual error, we have
succeeded in presenting a method to explicitly represent the error
generated by the dimension reduction, and to visualize it with the
help of conceptual lattices. For large data set we demonstrated a
method for estimating the scaling error in time polynomial with
respect to the data set size. In our experiments we showed that even
though a factorization using BMF terminates with apparently high
accuracy, the calculated scale does reflect only about 55\%
consistent extents. To prevent such high conceptual errors, we
envision an adaption of BMF that optimizes additionally for low
conceptual errors.

\printbibliography
\end{document}